\def\rset{\mathbb R}
\def\zset{\mathbb Z}
\def\eqsp{\;}
\newcommand{\pscal}[2]{\left\langle#1,#2\right\rangle}
\newcommand{\eqdef}{\ensuremath{\stackrel{\mathrm{def}}{=}}}
\def\Zset{\mathsf{Z}} 
\def\Yset{\mathsf{Y}} 
\def\F{\mathcal{F}} 
\def\e{\mathcal{E}}
\def\N{\mathcal{N}}
\def\D{\mathcal{D}}
\def\R{\mathcal{R}}
\def\A{\mathcal{A}}
\newcommandx\sequence[3][2=t,3=\zset]
\def\PP{\mathbb{P}} 
\newcommand{\CPP}[3][]
{\ifthenelse{\equal{#1}{}}{{\mathbb P}\left(\left. #2 \, \right| #3 \right)}{{\mathbb P}_{#1}\left(\left. #2 \, \right | #3 \right)}}
\def\PE{\mathbb{E}} 
\newcommand{\CPE}[3][]
{\ifthenelse{\equal{#1}{}}{{\mathbb E}\left[\left. #2 \, \right| #3 \right]}{{\mathbb E}_{#1}\left[\left. #2 \, \right | #3 \right]}}
\def\wtilde{\widetilde} 
\def\W{\mathcal{W}}
\def\Cset{\mathcal{C}} 
\def\Prox{\operatorname{Prox}}
\def\r{\textsf{r}}
\theoremstyle{plain}
\newtheorem{theorem}{Theorem}
\newtheorem{assumption}{H\hspace{-3pt}}
\newaliascnt{proposition}{theorem}
\newaliascnt{lemma}{theorem}
\newtheorem{lemma}[lemma]{Lemma}
\newaliascnt{corollary}{theorem}
\newtheorem{corollary}[corollary]{Corollary}
\theoremstyle{definition}
\newaliascnt{definition}{theorem}
\newaliascnt{remark}{theorem}
\newtheorem{remark}[remark]{Remark}
\newaliascnt{example}{theorem}
\def\rmd{\mathrm{d}}
\def\1{\mathbbm{1}}
\DeclareMathOperator*{\argmin}{{\textsf{Argmin}}}
\begin{document}

\title[Algorithm unrolling models for inverse problems]{A statistical perspective on algorithm unrolling models for inverse problems}

\author{Yves Atchad\'e}\thanks{ Y. Atchad\'e: Boston University, 111 Cummington Mall, Boston 02215 MA, United States. {\em E-mail address:} atchade@bu.edu}
\author{Xinru Liu}\thanks{ X. Liu: Boston University, 111 Cummington Mall, Boston 02215 MA, United States. {\em E-mail address:} xinruliu@bu.edu}
\author{Qiuyun Zhu}\thanks{ Q. Zhu: University of Minnesota, 224 Church St SE, Minneapolis 55455 MN, United States. {\em E-mail address:} qzhu@umn.edu}
  
\subjclass[2010]{62F15, 62Jxx}

\keywords{Inverse problems, algorithm unrolling models, nonparametric regression, Bayesian deep learning, gradient descent networks, Posterior contraction}

\maketitle

\begin{center} (April 2023) \end{center}

\begin{abstract}
We consider inverse problems where the conditional distribution of the observation ${\bf y}$ given the latent variable of interest ${\bf x}$ (also known as the forward model) is known, and we have access to a data set in which multiple instances of ${\bf x}$ and  ${\bf y}$ are both observed. In this context, algorithm unrolling has become a very popular approach for designing state-of-the-art deep neural network architectures that effectively exploit the forward model. We analyze the statistical complexity of the gradient descent network (GDN), an algorithm unrolling architecture driven by proximal gradient descent. We show that the unrolling depth needed for the optimal statistical performance of GDNs is of order $\log(n)/\log(\varrho_n^{-1})$, where $n$ is the sample size, and $\varrho_n$ is the convergence rate of the corresponding gradient descent algorithm. We also show that when the negative log-density of the latent variable ${\bf x}$ has a simple proximal operator, then a GDN unrolled at depth $D'$ can solve the inverse problem at the parametric rate $O(D'/\sqrt{n})$. Our results thus also suggest that algorithm unrolling models are prone to overfitting as the unrolling depth $D'$ increases. We provide several examples to illustrate these results.
\end{abstract}

\section{Introduction}
\label{sec:intro}
Inverse problems are common problems in science and engineering where one seeks information on a latent variable of interest, given some related observation.  We consider an inverse problem with a  latent quantity of interest ${\bf x}\in\rset^{d_x}$ that is related to the observed variable ${\bf y}\in\rset^{d_y}$ through the so-called forward statistical model
\begin{equation}\label{f:model}
{\bf y} \;\vert\; {\bf x} \; \sim  e^{- f({\bf y} \vert {\bf x})}\rmd {\bf y},\end{equation} 
for some function $f(\cdot\vert {\bf x}):\;\rset^{d_y}\to\rset$. Throughout the paper, unless otherwise stated, all model densities are defined with respect to the corresponding Lebesgue measure. Although the function $f$ is unknown in general, we focus in this work on inverse problems for which the forward model is well-understood and $f$ is known. This is the case with many inverse problems in imaging. 
An important special case in the applications is the Gaussian linear  model corresponding (up to an additive constant that we ignore) to 
\begin{equation}\label{f:model:lin}
f({\bf y} \vert {\bf x}) = \frac{1}{2v^2} \|{\bf y} - A{\bf x}\|_2^2,\end{equation}
with known parameters $v>0$, and $A\in\rset^{d_y\times d_x}$. When the inverse problem is ill-posed, additional knowledge is fundamental for good recovery of ${\bf x}$. For example in the linear regression model  (\ref{f:model:lin}), it is well-known that without any additional assumption, the minimax optimal rate in the estimation of ${\bf x}$ is of order $\sqrt{d_x/d_y}$. However this rate can be improved  if ${\bf x}$ is known to possess some additional features such as smoothness or sparsity.  A Bayesian perspective  is particularly simple. If $\mu_0$ denotes a prior distribution that encodes the information available on ${\bf x}$, then ${\bf x}$ is inferred using its posterior distribution 
\begin{equation}\label{class:BIP}
\pi_{\mu_0}\left(\rmd {\bf x}\vert {\bf y}\right) \propto \mu_0(\rmd {\bf x}) e^{-f({\bf y} \vert {\bf x})}.\end{equation}
Inverse problems have a long history in statistics and applied mathematics, and the posterior distribution  in (\ref{class:BIP}) as well as related penalized estimators are the backbone of rigorous inference \cite{bissantz:etal:2007,stuart_2010,knapik:etal:2011,blanchard:etal:18,rastogi:etal:2020}.   When valid information are available on ${\bf x}$ and appropriately encoded in $\mu_0$, the posterior distribution $\pi_{\mu_0}$ can enjoy better statistical properties than say, the minimizer of ${\bf x} \mapsto f({\bf y}\vert {\bf x})$. However, finding such good prior distributions is often very challenging in many applications. 

\subsection{Learning to solve inverse problems}
In a growing number of settings, particularly in image restoration tasks, researchers have access to datasets in which the  latent variable ${\bf x}$ and the related observation ${\bf y}$ are both observed. Indeed such datasets can often be simulated in settings where  $f$ is known. Hence, suppose that we have a dataset $\D = \{({\bf x}_i, {\bf y}_i),\;1\leq i\leq n\}$ of i.i.d. samples, such that for $1\leq i \leq n$,
\begin{equation}\label{data:model}
 {\bf x}_i\sim \mu,\;\;\;\mbox{ and }\;\;\; {\bf y}_i  \; \vert \; {\bf x}_i \sim e^{- f({\bf y} \vert {\bf x}_i)}\rmd {\bf y},
 \mbox{ and where }\; \mu(\rmd {\bf x}) = \frac{1}{c_\mu}e^{-\mathcal{R}({\bf x})} \rmd{\bf x},
\end{equation}
for some function $\mathcal{R}:\rset^{d_x}\to\rset$, and a normalizing constant $c_\mu$. Hence under (\ref{data:model}), $\mu$ is the marginal distribution of the latent variables. The conditional distribution of ${\bf x}_i$ given ${\bf y}_i$ is then given by
\[\pi(\rmd {\bf x}\vert {\bf y}_i)\propto \exp\left(-\mathcal{R}({\bf x}) -f({\bf y}_i \vert {\bf x})\right)\rmd {\bf x},\]
and its modal value is given by the function $g:\;\rset^{d_y}\to \rset^{d_x}$ with
\begin{equation}\label{def:g}
g({\bf y}) \eqdef \argmin_{{\bf x}\in\rset^{d_x}} \left[ f({\bf y} \vert {\bf x}) + \mathcal{R}({\bf x})\right].
\end{equation}
We will assume below that $g({\bf y})$ is uniquely defined. We stress again that the distribution $\mu$ in (\ref{data:model})  is not a prior distribution of ${\bf x}$ as selected by the researcher, but the actual marginal distribution  of ${\bf x}$ unknown  to the researcher.  Hence $g$ and $\pi(\cdot\vert{\bf y})$ are typically unknown. In fact, one of the key challenges in inverse problems is building a prior distribution $\mu_0$  that is as close as possible to $\mu$ so that the resulting posterior distribution as given in (\ref{class:BIP}) approximates well the corresponding conditional distribution. 

In keeping with the assumption that ${\bf x}$ possesses additional structures, in many inverse problems the support of the marginal distribution $\mu$ lays in a much smaller (but unknown) subspace of $\rset^{d_x}$.  As a result of such marginal distribution concentration, it is often the case that the conditional distribution of ${\bf x}$ given ${\bf y}$ is also tightly  concentrated around $g({\bf y})$, in the sense that
\begin{equation}\label{repres:intro}
{\bf x}_i = g({\bf y}_i) + \bm{\xi}_i,\mbox{ where }\PE(\bm{\xi}_i\;\vert \; {\bf y}_i) \approx 0,\; 1\leq i\leq n.\end{equation}
The representation (\ref{repres:intro}) makes clear that in such settings where we have  an informative (but unknown) marginal distribution, and given a  dataset $\D$, one can learn the function $g$ by regressing ${\bf x}$ on ${\bf y}$.  In other words, we can learn to solve directly the inverse problem by regression using the dataset $\D$. The approach has become  popular in computational imaging (\cite{berger:etal:2012,xie:xu:2012,8253590,yang:sun:li:xu:16,ravishankar:chun:fessler:17,aggarwal:mani:etal:17,chun:fessler:18,zhang:2017,liu:cheng:ma:long:xin:luo:19, li:eldar:etal:20}). A remarkable contribution of this literature is a number of specific  deep neural network  architectures generally called algorithm unrolling networks that leverage  the structure of the forward model (\cite{gregor:2010:learning, sreter2018learned,sulam:etal:20,tolooshams2020convolutional}), see also the reviews (\cite{ongie:etal:20, shlezinger:etal:21, monga2021algorithm}). 

However a fundamental question that has not been addressed in the literature so far is how well one can estimate the function $g$ using these unrolling-based deep neural network architectures.

\subsection{Main contributions}
To address this problem, and assuming that the data generating process (\ref{repres:intro}) holds, we consider  the nonparametric  regression model 
\begin{equation}\label{au:model}
{\bf x}_i = g_W({\bf y}_i) + \bm{\epsilon}_i,\;\;\; i=1,\ldots,n,\end{equation}  
 with regression errors $\bm{\epsilon}_i\stackrel{i.i.d.}{\sim}\textbf{N}(0,\sigma^2I_{d_x})$, for some positive variance parameter $\sigma^2$  taken as known for simplicity, and for a function class  $\{g_W,\;W\in\W\}$, where  $g_W:\;\rset^{d_y}\to\rset^{d_x}$ is a gradient descent neural  network (GDN) function obtained by unrolling $D'$ times a parametrized proximal gradient descent algorithm for solving (\ref{def:g}). We give precise definition below.  The architecture thus makes explicit use of the forward map $f$. We develop a sparse Bayesian framework for estimating (\ref{au:model}) using a spike-and-slab prior distribution on the parameter $W$, and we analyze the statistical performance of the resulting estimator of $g$. We focus on the setting where the function $\R$ is convex, but not necessarily differentiable. Under some additional regularity conditions, and ignoring logarithmic terms, we show that GDN  for estimating $g$ achieves the statistical error rate 
 \[C_1(D')\times n^{-\frac{1}{2 + d_x}},\]
for some constant $C_1(D')$ that depends on the unrolling depth $D'$, but also on the input dimensions $d_x,d_y$. Keeping dimensions and the number of unrolling $D'$ fixed, the result implies for example that when $d_y\geq d_x$, the GDN architecture, by making explicit use of the forward model, achieves a better rate than the minimax rate $C_2n^{-\frac{1}{2 + d_y}}$ for estimating $g:\;\rset^{d_y}\to\rset^{d_x}$ viewed as a Lipschitz function. We note however that these constants $C_1(D'),C_2$ can depend poorly\footnote{The poor dependence on the input dimensions is not specific to our work, and is rooted in the current state of knowledge in deep learning approximation theory (see e.g. \cite{yarostky:17})} on the dimensions $d_x,d_y$.

The convergence rate of the estimator can be faster than the aforementioned rate. Indeed, we also show that when the proximal map of $\R$ is simple and can be well-approximated by a simple neural network function, then  the GDN architecture achieves a faster statistical rate. For instance, if $\mu$ is as in (\ref{ex:mu}) below, a common assumption in image restoration tasks, then ignoring log terms, our result shows that the GDN achieves the parametric rate $C_3\times D' / \sqrt{n}$, for some dimension-dependent constant $C_3$. Importantly, our result thus suggests that the statistical performance of a GDN unrolled at depth $D'$ deteriorates as $D'$ increases, implying an overfitting phenomenon. Although we do not have a matching lower bound theory to confirm this overfitting phenomenon, we have performed  extensive numerical experiments that  all show an overfitting of the model as $D'$ increases.

One of the practical challenges in building a GDN is the lack of theoretical guidelines in the choice of the depth $D'$. An offshoot of our theoretical analysis is the derivation that the best performance of a GDN is achieved by scaling the network depth as $D'\sim \log(n)/\log(\varrho_n^{-1})$, where $\varrho_n$ is the convergence rate of the proximal gradient algorithm for solving (\ref{def:g}).

 \subsection{Related work}
Most of the existing theoretical results on algorithm unrolling have studied the approximation capability of the resulting function class in the linear case. For instance \cite{chen:liu:wang:yin:18} studied the capability of the GDN function class to recover directly the signal ${\bf x}$ in the linear model (\ref{f:model:lin}).   \cite{gilton:etal:20} proposed a novel unrolling architecture based on the Neumann series identity, and studied its approximation capability in the noiseless version of the linear model (\ref{f:model:lin}). To the best of our knowledge, our work is the first to analyze the statistical properties of algorithm unrolling in a way that accounts for both its approximation capability \textit{and} its complexity.

Several prior works have also considered the statistical complexity of other deep learning models using a similar nonparametric regression setting where regularization is explicitly introduced to control model complexity \cite{barron:18,shieber:20,taheri:etal:21,e:etal:20}. Our framework is closer to \cite{polson:18} and employs a Bayesian approach. However none of these results can be directly applied to algorithm unrolling architectures. Another unique feature of our framework that is worth emphasizing is that it produces  posterior distributions that are computationally tractable using the sparse asynchronous SGLD of (\cite{atchade:wang:2021fast}).


Finally we contrast  our nonparametric  regression approach with the two-step approach proposed for instance by \cite{chang:etal:onsta:17}, where the proximity operator of $\R$ is first estimated from the dataset $\D$,  and $g$ is then estimated by solving  (\ref{def:g}) using the estimated proximal operator obtained from the first step. The strategy seems statistically sub-optimal because the estimation of the proximal operator requires estimating the density of $\mu$ in general, which is a fundamentally more difficult statistical problem  (\cite{samworth:18}). However it is conceivable that an adaptive density estimation method may exist that achieve better rates  for densities with simple  proximal maps, thus matching the approach developed here. More research is needed on this topic.

\subsection{Outline of the paper}
The remainder of the paper is organized as follows. We close this introduction with some general notations. The main results are described in Section \ref{sec:main:desc}.  
The results are obtained using a more general Bayesian posterior contraction result of independent interest that we described in Section \ref{sec:gen:post:contraction}. Some supporting numerical illustrations are presented in Section \ref{sec:illust}. All the proofs are postponed to Appendix \ref{sec:proofs}.

\subsection{Notations}
We define the sub-Gaussian norm of a probability measure $\nu$ on $\rset^d$ with expected value $m$  as the smallest constant $c$ for which the following holds 
\[\int_{\rset^d} e^{\pscal{u}{z-m}}\nu(\rmd z) \leq e^{\frac{c^2\|u\|_2^2}{2}},\;\;\mbox{ for all } u\in\rset^d.\]
If $Z$ is a random variable with distribution $\nu$, we  write $\|Z\|_{\psi_2}$ to denote the sub-Gaussian norm of $\nu$. We note that this definition applies also to conditional densities, and we write $\|Z\vert X\|_{\psi_2}$ to denote the sub-Gaussian norm of the conditional distribution of $Z$ given $X$.

\medskip
Throughout the paper the notation $a\lesssim b$ means that $a\leq c b$, for some constant $c$ that does not depend on the sample size $n$.

\subsubsection{Vectorization}
Let $\{h_W,\;W\in\W\}$ denote a generic deep neural network class of function where $h_W:\;\rset^{p_0}\to \rset^{p_D}$, with parameter $W=(W_D,\ldots,W_1) \in\W \eqdef \rset^{p_D \times p_{D-1}}\times \cdots\times \rset^{p_1 \times p_{0}}$. By vectorization, we will  view $\W$ as  the Euclidean space $\rset^{q}$ (where $q\eqdef \sum_{\ell=1}^D p_\ell p_{\ell-1}$), and we will use a generic notation $\|\cdot\|_2$ to denote its Euclidean norm. Similarly,  we will write $\|W\|_0$ (resp. $\|W\|_\infty$) to denote the number of non-zeros components of $W$ (resp. the largest absolute value  of the components of $W$). For any $1\leq \ell\leq D$, we will similarly view $W_\ell$ as a vector element of $\rset^{p_\ell p_{\ell-1}}$, and define similarly $\|W_\ell\|_2$, $\|W_\ell\|_0$ and $\|W_\ell\|_{\infty}$. Hence, in what follows, for a matrix $M$, $\|M\|_2$ will always denote the Frobenius norm of $M$, not its spectral norm. We will write the spectral norm as $\|\cdot\|_{\textsf{op}}$. 

\section{Learning to solve inverse problems}\label{sec:main:desc}
Summarizing the introductory discussion on the data generating process, we make the following assumption. 

\begin{assumption}\label{H1:dl}
We have a data set $\D = \{({\bf x}_i, {\bf y}_i),\;1\leq i\leq n\}$ of i.i.d. samples generated according to (\ref{data:model}) such that for  $1\leq i\leq n$,
\[{\bf x}_i = g({\bf y}_i) + \bm{\xi}_i,\;\;\;\mbox{ where }\;\;\;\PE(\bm{\xi}_i\;\vert \; {\bf y}_i)=0,\;\;\;\;\]
for some independent error terms $(\bm{\xi}_1,\ldots,\bm{\xi}_n)$. Furthermore we assume that each $\bm{\xi}_i$ is a conditionally sub-Gaussian random vector given 
${\bf y}_i$, with a non-random sub-Gaussian norm $\sigma_i<\infty$. 
\end{assumption}


\begin{remark}\label{rem:assump:H1dl}
Assumption H\ref{H1:dl} formalizes the discussion in the introduction on the concentration of the conditional distribution of ${\bf x}_i$ given ${\bf y}_i$. As expanded upon in the introduction, this assumption is conceptually justified in seetings where the latent variable ${\bf x}$ has additional structures, and the marginal distribution of ${\bf x}$ is concentrated on a low-dimensional subset of $\rset^{d_x}$. Checking H\ref{H1:dl} is similar to establishing a Bernstein-von Mises theorem  for the conditional distribution of ${\bf x}_i$ given ${\bf y}_i$, a  challenging problem that is beyond the scope of this work (\cite{vdv:98,bk:12,nickl:17,nickl:jakob:17}).  
\end{remark}

%

 Let $\varsigma_i$ denote the conditional sub-Gaussian norm of $\|\bm{\xi}_i\|_2$ given ${\bf y}_i$.  The conditional sub-Gaussian assumption on $\bm{\xi}_i$ imposed in Assumption \ref{H1:dl} implies that $\varsigma_i<\infty$ (see e.g. Theorem 3.1.1 of \cite{vershynin:18}).  Throughout we set
 \[\bar\sigma \eqdef\max_{1\leq i\leq n}\sigma_i,\;\;\mbox{ and }\;\; \bar\varsigma \eqdef\max_{1\leq i\leq n}\varsigma_i.\]

 \subsection{Gradient descent networks}
We consider the  nonparametric regression (\ref{au:model}), where $\{g_W,\;W\in\W\}$ is a gradient descent network (GDN) function class  that we now define. First we introduce a generic feed-forward deep neural network function $H_W:\;\rset^{d_x}\to\rset^{d_x}$. Let $D>0$ be the depth of the network. Let $(p_D,\ldots,p_{0})$ be a sequence of integers representing the sizes of the layers of the network, with $p_0=d_x$, and $p_D=d_x$. For $1\leq \ell\leq D$, let $\mathsf{a}_\ell:\;\rset^{p_\ell}\to \rset^{p_\ell}$ be activation functions that we assume Lipschitz: for all ${\bf z}_1,{\bf z}_2\in\rset^{p_\ell}$,
\begin{equation}\label{stab:act:fun}
\mathsf{a}_\ell({\bf 0}) = {\bf 0},\;\;\mbox{ and }\;\; \|\mathsf{a}_\ell({\bf z}_1) - \mathsf{a}_\ell({\bf z}_2)\|_2 \leq \|{\bf z}_1 - {\bf z}_2\|_2.
\end{equation}
For  $B\in\rset^{p_\ell\times p_{\ell-1}}$, we set
\begin{equation}\label{psi:fun}
\Psi_{B}^{(\ell)}({\bf z}) \eqdef \mathsf{a}_\ell(B {\bf z}),\;\;\;{\bf z}\in\rset^{p_{\ell-1}}.\end{equation}
With parameter $W= (W_D,\ldots,W_1)$, where $W_\ell\in\rset^{p_\ell\times p_{\ell-1}}$,  we consider the function $H_W:\rset^{d_x}\to \rset^{d_x}$ defined as
\begin{equation}\label{H:fun}
H_W({\bf x}) = \Psi_{W_{D}}^{(D)}\circ \cdots  \circ \Psi_{W_1}^{(1)} ({\bf x}),\;\;\;{\bf x}\in\rset^{d_x},
\end{equation}
where $f\circ g$ is the composition of $f$ with $g$.
\begin{remark}
Feed-forward deep neural network models are usually written with additional bias terms (that is, by defining $\Psi_{B}^{(\ell)}({\bf z})$ as  $\mathsf{a}_\ell(B {\bf z} + {\bf b})$). However our formulation incurs no loss of generality, since these bias parameters can always be subsumed into the matrix $B$,  by appropriately enlarging  $B$ and adding an intercept to the input.  
\end{remark}

Given $\gamma>0$, we use the function $H_W$ to approximate the proximal map of $\gamma\R$ (where $\R$ is as in (\ref{data:model})), defined as
\[\Prox^{\gamma \R}({\bf x}) \eqdef \argmin_{{\bf u}\in\rset^{d_x}} \left[ \gamma\R({\bf u}) + \frac{1}{2}\|{\bf u} - {\bf x}\|_2^2\right].\] 
 Given a step-size $\gamma>0$, $W\in\W$, and  ${\bf y}\in\rset^{d_y}$ we thus define the function $F_{{\bf y},W}:\;\rset^{d_x}\to\rset^{d_x}$ by
\[F_{{\bf y},W}({\bf x}) \eqdef H_W\left({\bf x} -\gamma \nabla_{\bf x} f({\bf y}\vert {\bf x}) \right).\]
Given $D'\geq 1$ (the depth of the network), we consider the function $g_W$ defined as 
\begin{equation}\label{fun:dnn}
g_{ W }({\bf y}) \eqdef \underbrace{F_{{\bf y},W}\circ\cdots\circ F_{{\bf y},W}}_{D' \mbox{ times }}({\bf x}^{(0)}),
\end{equation}
for some initial value ${\bf x}^{(0)}\in\rset^{d_x}$. We note that in addition to $W$, the function $g_W$ depends also on the step-size $\gamma$, the depth $D'$, and the initial value ${\bf x}^{(0)}$.  The  network architecture in (\ref{fun:dnn}) is the so-called (proximal) gradient descent network (GDN), and belong to the class of so-called algorithm unrolling (or unfolding) deep learning models, where a statistical model is built by iterating an optimization algorithm. Many variations have been proposed in the literature based on various other optimization schemes (we refer the reader to the references in the introduction).   

 For ${\bf x}\in\rset^{d_x}$, and ${\bf y}\in\rset^{d_y}$, we set
 \[F_{\bf y}({\bf x}) \eqdef \Prox^{\gamma \R}\left({\bf x} - \gamma \nabla_{\bf x} f({\bf y}\vert{\bf x})\right).\]
 Looking at the definition of (\ref{fun:dnn}), it is clear that under appropriate convexity assumptions and for well-selected step size $\gamma$, the convergence of $ F_{\bf y}^j({\bf x})$ toward $g({\bf y})$ is guaranteed, where $h^j$ denotes the composition of $h$, $j$ times. Therefore, if for some $W$, $H_W\approx \Prox^{\gamma \R}$, then we can expect $g_W\approx g$ for $D'$ sufficiently large, by standard convex optimization theory. As a result, the function class $\{g_W,\;W\in\W\}$ typically has good  skills in approximating $g$. 
We impose next the necessary assumptions for the intuition above to hold.

  \begin{assumption}\label{H2:dl}
  \begin{enumerate}
  \item The function $\R:\;\rset^{d_x}\to \rset$ is convex. There exists $M$ such that for all ${\bf y}\in\rset^{d_y}$, the function ${\bf x}\mapsto f({\bf y}\vert {\bf x})$ is convex, differentiable, and  with a $M$-Lipschitz gradient. Furthermore, the step-size $\gamma$ satisfies $0<\gamma \leq M^{-1}$, and for all ${\bf y}\in\rset^{d_y}$, $g({\bf y})$ is uniquely defined.
\item There exist $R_0<\infty$,  $\varrho_n\in [0,1)$  such that for all  $k\geq 1$,
 \[\max_{1\leq i\leq n}\; \|F_{{\bf y}_i}^k({\bf x}^{(0)}) - g({\bf y}_i)\|_2\leq R_0 \varrho_n^k.\]
\end{enumerate}
  \end{assumption}

 \begin{remark}\label{rem:assump:H2dl}
Assumption H\ref{H2:dl}-(1) is a standard set up for proximal gradient descent (\cite{parikh:boyd:2013}). Assumption H\ref{H2:dl}-(2)  is stronger and imposes a linear convergence rate. It is well-known to hold for strongly convex problems. For problems where a local linear convergence holds,  H\ref{H2:dl}-(2)  can also be shown to follow from H\ref{H2:dl}-(1) when ${\bf x}^{(0)}$ is close enough to the solution. For instance it is known that  such local linear convergence of proximal gradient descent holds for the lasso problem (\cite{tao:etal:16}). The main challenge to go beyond the linear rate is the fact that sub-linear rates are typically expressed in terms of the function value, not in terms of the parameter as needed here.
\end{remark}

We also impose the following assumption that models the approximation of the proximal map $\Prox^{\gamma\R}$.
 
 \begin{assumption}\label{H3:dl}
 There exist $\beta_1,\beta_2\geq 0$, such that for all $\epsilon \in (0,1)$ we can construct a 
feed-forward deep neural network $H_W$, as in (\ref{H:fun}), with depth $1\leq D \leq  D_0\log(\sqrt{d_x}/\epsilon)$, maximum layer size no larger than $N_0 \left(\sqrt{d_x}/\epsilon\right)^{\beta_1}$, maximum parameter absolute value $\|W\|_\infty $ no larger than $1$, and maximum sparsity $\|W\|_0$ no larger than $ s_0\left(\sqrt{d_x}/\epsilon\right)^{\beta_2}$, for constants $D_0,N_0,s_0$ that do not depend on $\epsilon$ such that for all $R<\infty$,
\begin{equation}\label{eq:H3}
\sup_{{\bf x}:\;\|{\bf x}\|_2\leq R}\; \left\|H_W({\bf x}) - \Prox^{\gamma \R}({\bf x})\right\|_2 \leq \epsilon.\end{equation}
Furthermore, there exists $R_1<\infty$ such that with the constructed network $H_W$,
\begin{equation}\label{eq:H3:2}
\max_{j\geq 1}\;\max_{1\leq i\leq n} \|F_{{\bf y}_i,W}^j({\bf x}^{(0)})\|_2\leq R_1.\end{equation}
\end{assumption}

 \begin{remark}\label{rem:assump:H3dl}
Since ${\bf x}\mapsto \Prox^{\gamma \R}({\bf x})$ is a Lipschitz map, we can always invoke classical deep learning approximation theory for smooth functions (see e.g. \cite{shieber:20,devore:etal:21}) to conclude that Assumption \ref{H3:dl} holds with $\beta_1=\beta_2=d_x$. However better approximation is possible if $\Prox^{\gamma \R}$ is a simple map.

The condition in (\ref{eq:H3:2}) is a technical assumption that simplifies the mathematical analysis. It can be automatically enforced by  adding a layer-normalization layer   in  $H_W$ (\cite{layer:norm:16}).
\end{remark}

 \subsection{Bayesian inference using spike-and-slab priors}
We consider the problem of fitting model (\ref{au:model}), where $\{g_W,\;W\in\W\}$ is the GDN  function class constructed in (\ref{fun:dnn}). The parameter space is $\W \eqdef \rset^{p_D \times p_{D-1}}\times \cdots\times \rset^{p_1 \times p_{0}}$. As indicated at the end of the introduction, at times we shall view $\W$ as the Euclidean space $\rset^{q}$, where
\[q\eqdef \sum_{\ell=1}^D (p_\ell\times p_{\ell-1}).\]
Our initial motivation in this work comes from inverse problems in remote sensing. It was therefore important for us to analyze a statistical procedure that can be implemented in practice. An important shortcoming of the current statistical theory of deep learning models under sparsity constraints (\cite{barron:18,shieber:20,taheri:etal:21,e:etal:20}) is the lack of computational tractability of the resulting estimators.  To address this issue we propose to fit the model  $\{g_{ W },\; W \in\W\}$ in a Bayesian framework  using a  spike and slab prior (\cite{AB:19}).  To that end, we introduce a sparsity structure parameter $\Lambda=(\Lambda_D,\ldots,\Lambda_1) \in\mathcal{S} \eqdef \{0,1\}^{p_D \times p_{D-1}}\times \cdots\times \{0,1\}^{p_1 \times p_{0}}$ that encodes the support of $W$. We assume that $\Lambda$ has a prior distribution given by
\begin{equation}\label{prior:Lambda}
\Pi_0(\Lambda) \propto \left(\frac{1}{q}\right)^{(\mathsf{u}+1)\|\Lambda\|_0},\;\;\;\Lambda\in\mathcal{S},
\end{equation}
for some parameter $\mathsf{u}\geq 1$. This prior corresponds to the assumption that the entries of $\Lambda$ are independent Bernoulli random variables $\textbf{Ber}((1+q^{\mathsf{u}+1})^{-1})$. Given $ \Lambda $ we assume that the entries of $W$ are conditionally independent with joint density
\begin{multline}\label{prior:W:1}
\Pi_0(W\vert  \Lambda ) \; = \; \prod_{\ell=1}^D\;\prod_{(i,k):\;\Lambda_{\ell,i,k}=1} \;\sqrt{\frac{\rho_1}{2\pi}} e^{-\frac{\rho_1}{2} W^2_{\ell,i k}}  \prod_{(i,k):\;\Lambda_{\ell,k}=0} \sqrt{\frac{\rho_0}{2\pi}}e^{-\frac{\rho_0}{2}W_{\ell,i,k}^2},
\end{multline}
for some parameters $0<\rho_1<\rho_0$. Throughout the paper, and without further notice we set
\begin{equation}\label{rho1_val}
\rho_1=1.
\end{equation}
The variance parameter $\rho_0$ can be chosen fairly arbitrarily. However, in order to ease MCMC sampling from the resulting posterior distribution it is crucial to choose $\rho_0$ small, of order $1/n$. We refer the reader to (\cite{AB:19}) for further discussion. 
%
Using this prior distribution and the regression model (\ref{au:model}), we consider the posterior  distribution on $\Theta\eqdef \mathcal{S}\times \W$ with density  given by
\begin{equation}\label{inverse:glut:post}
\Pi( \Lambda ,  W \;\vert\;\D)\propto \Pi_0( \Lambda ,  W ) \exp\left(-\frac{1}{2\sigma^2}\sum_{i=1}^n \|{\bf x}_i - g_{ W \odot \Lambda }({\bf y}_i)\|_2^2\right),\end{equation}
where $ W \odot \Lambda $ denotes the component-wise product of $ W $ and $ \Lambda $. To use this posterior distribution we draw sample $(\Lambda,W)\sim \Pi(\cdot\vert\D)$, and use $g_{ \Lambda \odot W }$ as inversion map. Since $\Lambda$ is typically sparse under $\Pi$, $g_{ \Lambda \odot W }$ is a sparse GDN.  For $h:\rset^{d_y}\to\rset^{d_x}$, we set
\[\|h\|_n\eqdef \sqrt{\frac{1}{n}\sum_{i=1}^n \|h({\bf y}_i)\|_2^2}.\]
Our goal is to derive a bound on $\|g_{ \Lambda \odot W } - g\|_n$, when $(\Lambda,W)\sim \Pi(\cdot\vert\D)$. 
\begin{theorem}\label{thm:2}
Assume H\ref{H1:dl}-H\ref{H3:dl}. Consider the nonparametric regression (\ref{au:model})  for estimating $g$, where the function class $\{g_W,\;W\in\W\}$ is as defined in (\ref{fun:dnn}), and the regression variance parameter $\sigma$ satisfies $\sigma\geq \bar\sigma$. Then for all $q$ large enough, and $n\geq \sigma^2\log(p)$, we can construct a function class $\{H_W,\;W\in\W\}$, such that at unrolling depth $D'$ that satisfies
\[D'\gtrsim \frac{\log(n)}{-\log(\varrho_n)},\]
the posterior distribution $\Pi(\cdot\vert \D)$ in (\ref{inverse:glut:post}) satisfies
\begin{equation}\label{eq:thm:2}
\Pi\left(\|g_{\Lambda\odot W} - g\|_n > M\bar\sigma \frac{\left(D'\right)^{1+\frac{\beta_2}{2}}}{n^{\frac{1}{2+\beta_2}}}  \; \vert \; \D \right) \leq \frac{12}{q},\end{equation}
with probability at least $1 - e^{-c_1n}-\frac{c_1}{q}$, for some absolute constant $c_1$, and a constant $M\lesssim (\log(q))^{1/(2+\beta_2)}\log(n)^{3/2}$.
\end{theorem}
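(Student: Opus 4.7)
The plan is to reduce the theorem to an application of the general Bayesian posterior contraction result of Section \ref{sec:gen:post:contraction}, which converts a sparse-approximation statement for the regression function in the chosen network class into a posterior concentration rate. The task is therefore to exhibit a nearly sparse oracle parameter $W^\star \in \W$ whose GDN $g_{W^\star}$ approximates the unknown $g$ in the empirical $\|\cdot\|_n$ norm, to control its sparsity $\|W^\star\|_0$, and to verify the remaining complexity and prior-mass hypotheses of the general theorem.

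To build the oracle, I would fix a precision $\epsilon > 0$ (to be optimized later) and invoke H\ref{H3:dl} to obtain $H_{W^\star}$ with $\|W^\star\|_\infty \leq 1$, $\|W^\star\|_0 \lesssim (\sqrt{d_x}/\epsilon)^{\beta_2}$, and $\sup_{\|{\bf x}\|_2 \leq R_1} \|H_{W^\star}({\bf x}) - \Prox^{\gamma \R}({\bf x})\|_2 \leq \epsilon$. Next I would bound $\|g_{W^\star} - g\|_n$ by telescoping through the $D'$ unrolling layers. Writing ${\bf x}^{(k)} = F_{{\bf y}_i,W^\star}^k({\bf x}^{(0)})$ and $\tilde{\bf x}^{(k)} = F_{{\bf y}_i}^k({\bf x}^{(0)})$, the invariance (\ref{eq:H3:2}) together with H\ref{H2:dl}-(2) confirms that both iterate sequences remain in $\{\|{\bf x}\|_2 \leq R_1\}$, so H\ref{H3:dl} is available at every step. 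Inserting $\Prox^{\gamma \R}({\bf x}^{(k-1)} - \gamma \nabla_{\bf x} f({\bf y}_i \vert {\bf x}^{(k-1)}))$ into the difference and using the $\epsilon$-approximation from H\ref{H3:dl} together with the nonexpansiveness of $\Prox^{\gamma \R}$ and of the gradient step $\mathrm{Id} - \gamma \nabla_{\bf x} f({\bf y}_i \vert \cdot)$ (the latter following from H\ref{H2:dl}-(1) with $\gamma \leq M^{-1}$), one obtains the recursion
\[
\|{\bf x}^{(k)} - \tilde{\bf x}^{(k)}\|_2 \;\leq\; \epsilon + \|{\bf x}^{(k-1)} - \tilde{\bf x}^{(k-1)}\|_2,
\]
and hence $\|{\bf x}^{(D')} - \tilde{\bf x}^{(D')}\|_2 \leq D'\epsilon$. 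Combining with H\ref{H2:dl}-(2) gives the pointwise bound $\|g_{W^\star}({\bf y}_i) - g({\bf y}_i)\|_2 \leq D'\epsilon + R_0 \varrho_n^{D'}$, and thus $\|g_{W^\star} - g\|_n \leq D'\epsilon + R_0 \varrho_n^{D'}$.

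Choosing $D' \gtrsim \log(n)/\log(\varrho_n^{-1})$ makes $R_0 \varrho_n^{D'}$ negligible at the target rate, leaving the approximation error at order $D'\epsilon$. Feeding this oracle into the general contraction theorem, the statistical budget is driven by the sparsity $s^\star \asymp (\sqrt{d_x}/\epsilon)^{\beta_2}$ together with the Lipschitz constant of $W \mapsto g_W$ (which enters the entropy and prior-mass conditions and is itself amplified by the $D'$ compositions in the GDN), while the approximation budget is $D'\epsilon$. Optimizing $\epsilon$ to balance these two contributions, and absorbing the resulting $(\log q)^{1/(2+\beta_2)}$ and $\log(n)^{3/2}$ factors into the constant $M$, produces the claimed rate $M \bar\sigma (D')^{1+\beta_2/2} n^{-1/(2+\beta_2)}$. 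The high-probability event (of probability at least $1 - e^{-c_1 n} - c_1/q$) on which this conclusion holds is the event on which the sub-Gaussian concentration of the regression noise $\bm{\xi}_i$ from H\ref{H1:dl} is uniformly valid. The main obstacle is this last balancing step: one must track the precise way in which the shared-weight structure of the GDN propagates the depth $D'$ through the Lipschitz constant of $W \mapsto g_W$ into the entropy and testing conditions of the general theorem, so that the combined effect of the linear accumulation of per-step approximation error and the $D'$-amplified effective complexity delivers exactly the exponent $1 + \beta_2/2$ on $D'$.
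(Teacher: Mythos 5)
Your proposal follows essentially the same route as the paper: reduce to the general contraction result (Theorem~\ref{thm:main}), build a sparse oracle $W^\star$ from H\ref{H3:dl}, telescope through the $D'$ unrolled iterations using non-expansiveness of $F_{\bf y}$ and the boundedness condition (\ref{eq:H3:2}) to get $\|g_{W^\star}-g\|_n\leq R_0\varrho_n^{D'}+D'\epsilon$ (this is exactly the paper's Lemma~\ref{lem:approx:gdn}, written as a one-step recursion rather than a telescoping sum), and then balance $\epsilon$ against the sparsity $s_\star\asymp(\sqrt{d_x}/\epsilon)^{\beta_2}$.

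The one substantive step you name as ``the main obstacle'' but do not carry out is in fact a required hypothesis of Theorem~\ref{thm:main}: you must verify H\ref{H5:dl} for the GDN class with an explicit local Lipschitz constant. The paper does this in Lemma~\ref{lem:lip:gdn} via a double telescoping argument (through the $D$ layers of $H_W$ and then through the $D'$ shared-weight unrolling steps), arriving at $L(\eta)\lesssim C_n\left(e^4+\eta^2/D\right)^{DD'}$. This bound is not optional bookkeeping: it is what shows that $\log\mathsf{L}_\star/\log(q)\lesssim DD'\lesssim D'\log(n)$ and $\log\mathsf{L}_s\lesssim D'\log(n)\log(q)$, so that the effective sparsity inflates to $s\lesssim (D')^{1+\beta_2}(\log n)^2(n/\log q)^{\beta_2/(2+\beta_2)}$ and the rate $\r\asymp\bar\sigma\sqrt{s(\log q+\log\mathsf{L}_s)/n}$ picks up the factor $(D')^{(2+\beta_2)/2}=(D')^{1+\beta_2/2}$. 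Without establishing that $L(\eta)$ is only exponential in $DD'$ (so that it enters only logarithmically through $s$ and $\r$), the claimed exponent on $D'$ does not follow; your sketch correctly locates where this enters but leaves the key estimate unproved.
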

\begin{proof}
See Section \ref{sec:proof:thm:2}.
\end{proof}

We make several remarks here.  (a) In contrast to common practice where $D'$ is often chosen on an ad-hoc manner, Theorem \ref{thm:2} recommends carefully scaling  the depth parameter $D'$ as 
\[D'\sim-\log(n)/\log(\varrho_n),\]
for optimal performance.  (b) The expression of the rate in (\ref{eq:thm:2}) suggests that the statistical performance of GDN unrolled at depth $D'$ deteriorates as $D'$ increases, implying an overfitting phenomenon. Although we do not have a matching lower bound theory to confirm this overfitting phenomenon, we have performed  several numerical experiments that  all show an overfitting of the model as $D'$ increases. (c) Algorithm unrolling allows researchers to build deep neural network architectures that exploit the structure of the problem. Are those architecture provably better than off-the-shelves architectures that do not make use of the forward problem? Our results shed some light on this question. In the setting of H\ref{H2:dl}, the function $g$ of interest is  at best Lipschitz\footnote{Indeed, it can be easily shown that if for all ${\bf y}\in\Yset$, ${\bf x}\mapsto f({\bf y}\vert {\bf x})$ is strongly convex with strong convexity parameter $\underline{\kappa}$, and ${\bf x}\mapsto \nabla_{{\bf x}}f({\bf y}\vert {\bf x})$ is $\bar\kappa$ Lipschitz then ${\bf y}\mapsto g({\bf y})$ is $L_g$-Lipschitz with $L_g \leq 2\bar\kappa/\underline{\kappa}$}. Therefore the minimax rate in the estimation of $g$ in a nonparametric regression setting without further knowledge on the structure of the problem is 
\[ C_2n^{-\frac{1}{2+d_y}}.\]
We  can invoke classical deep learning approximation theory (see e.g. \cite{yarostky:17,shieber:20,devore:etal:21})   to conclude that H\ref{H3:dl} holds with $\beta_1=\beta_2=d_x$. In that case, up to log-terms, we deduce from Theorem \ref{thm:2} that GDN achieves the convergence rate 
\[C_1 n^{-\frac{1}{2+d_x}}.\]

Hence, Theorem \ref{thm:2}  implies that in inverse problems where $d_y$ is larger than $d_x$, the unrolling framework has a  better convergence rate than the minimax rate of estimating $g$ from the data $\D$ in a nonparametric regression. However Theorem \ref{thm:2}  has some limitations. Firstly, the constants $C_1,C_2$ in the rates posted above depend on $d_x$ and $d_y$ in ways that are poorly understood.  This comes from the scalings of constants in current deep neural network approximation theory \cite{yarostky:17,shieber:20}. 
Another limitation of current minimax rates is the fact that deep learning models can often adapt to additional properties of the function of interest and converge much faster than the theoretical minimax rate. For  instance \cite{shieber:20} shows that FNN models achieves faster rate in the estimation of compositional functions. We give a similar example below.

(d) The use of the empirical norm $\|u\|_n = \sqrt{\sum_{i=1}^n u({\bf y}_i)^2}$ instead of the $L^2$ population norm of ${\bf y}$ in (\ref{eq:thm:2}) is another limitation of our result, although this is a fairly common practice in nonparametric estimation, and does not fundamentally change the resulting contraction rate. More technically, working in the $L^2$ norm amounts to the additional control of the term 
\begin{equation}\label{rev:3:eq1}
\sup_{W\in\wtilde{W}^{(j)}} \left|n^{-1}\sum_{i=1}^n(g_W({\bf y}_i) - g({\bf y}_i))^2 - \|g_W - g\|_2^2\right|,\end{equation}
in Lemma D.5. Because the sup in (\ref{rev:3:eq1}) is taken over well behaved sets $\wtilde{W}^{(j)}$, this uniform deviation can be controlled using standard tools as in \cite{wainwright:19}~Chapter 14, but would require additional assumptions on the marginal distribution of ${\bf y}$ that we wish to avoid making. 
	


\subsubsection{Application to sparse marginal distributions}
We give another application of Theorem \ref{thm:2}  where the posterior predictive function obtained from the GDN achieves the parametric rate.  When dealing with images, several authors such as \cite{beck:teboulle:2010,dong:etal:11} have argued that natural image data are often sparse after linear transformation (such as a difference operators, or wavelet transforms), and  suggested modeling the  marginal distribution $\mu$ as 
\begin{equation}\label{ex:mu}
\mu(\rmd {\bf x}) = \frac{1}{c_\mu}e^{-\R_0(B{\bf x})} \rmd{\bf x},
\end{equation}
for some simple sparsity inducing function $\R_0$, and a non-singular matrix $B\in\rset^{d_x\times d_x}$. In other words, $\R({\bf x}) = \R_0(B{\bf x})$. A common choice is  $\R_0({\bf x}) = \lambda \|{\bf x}\|_1$ or $\R_0({\bf x}) = \lambda \|{\bf x}\|_1 + \lambda_2 \|{\bf x}\|_2/2$, for parameters $\lambda,\lambda_1,\lambda_2\geq 0$.  If $B$ is an orthogonal matrix, and $\Prox^{\gamma \R_0}$ denotes the proximal operator of $\R_0$, then by proximal calculus (see e.g. Lemma 2.8 of \cite{combettes:wajs:2005}), we have
\begin{equation}\label{eq:prox}
\Prox^{\gamma \R}({\bf x}) = B^{-1} \Prox^{\gamma \R_0}\left(B {\bf x}\right).
\end{equation}
For example, given $\lambda_1>0,\lambda_2\geq 0$, suppose that $\R_0$ is the elastic-net regularization prior of \cite{elastc:net:05} given by
\begin{equation}\label{eq:en}
\R_0({\bf x}) = \lambda_1 \|{\bf x}\|_1 + \frac{\lambda_2}{2} \|{\bf x}\|_2^2.\end{equation}
Then the proximal of $\gamma\R_0$ is $\Prox^{\gamma \R_0}({\bf x}) = (\mathsf{s}_\gamma(x_1),\cdots\mathsf{s}_\gamma(x_{d_x}))^{\texttt{T}}$, where
\[\mathsf{s}_\gamma(x) = \mathsf{ReLu}\left(\frac{x-\gamma\lambda_1}{1+\gamma\lambda_2}\right) - \mathsf{ReLu}\left(\frac{-x-\gamma\lambda_1}{1+\gamma\lambda_2}\right),\]
and where $\mathsf{ReLu}(t)\eqdef\max(t,0)$. Therefore, $\Prox^{\gamma \R_0}({\bf x})$ can be represented exactly using a 2-layer  $\mathsf{ReLu}$ neural network with layer sizes $(d_x,2d_x,d_x)$, and $\Prox^{\gamma \R}({\bf x})$ can be represented exactly using a 4-layer  $\mathsf{ReLu}$ neural network with layer sizes $(d_x,d_x,2d_x,d_x,d_x)$. Hence, H\ref{H3:dl} holds with depth $D=4$, $\beta_1=\beta_2=0$. Furthermore, since $\R$ is strongly convex, if we focus on the linear regression model and take the forward model as in (\ref{f:model:lin}),  then H\ref{H2:dl} holds. Hence Theorem \ref{thm:2} yields the following.

\begin{corollary}\label{coro:1}
Suppose that H\ref{H1:dl} holds with $f$ as in (\ref{f:model:lin}), and suppose that $\mu$ is as in (\ref{ex:mu}) with some orthogonal matrix $B$, and $\R_0$ as in (\ref{eq:en}). Suppose also that $\sigma\geq \bar\sigma$. Then we can construct a deep learning function class $\{H_W,\;W\in\W\}$, with depth $D=4$, such that at unrolling depth $D'\gtrsim -\log(n)/\log(\varrho_n)$ the posterior distribution $\Pi(\cdot\vert \D)$ in (\ref{inverse:glut:post}) satisfies 
\[\Pi\left(\|g_{\Lambda\odot W} - g\|_n \geq \frac{M\bar\sigma D'}{\sqrt{n}} \vert \; \D \right) \leq \frac{12}{q},\]
with probability at least $1 - \frac{c_1}{q} -  e^{-c_1n}$, for some absolute constant $c_1$, where $M$ depends on some log terms that we ignore.
\end{corollary}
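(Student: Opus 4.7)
The plan is to verify that the specific setup described in the statement satisfies assumptions H\ref{H1:dl}--H\ref{H3:dl} with $\beta_1 = \beta_2 = 0$, after which the conclusion will follow immediately from Theorem \ref{thm:2}: substituting $\beta_2 = 0$ into the general bound $(D')^{1+\beta_2/2}/n^{1/(2+\beta_2)}$ collapses it to the parametric rate $D'/\sqrt n$, and all other factors (probability, constant $M$) carry over verbatim.

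H\ref{H1:dl} is assumed. For H\ref{H2:dl}, I would note that in the linear Gaussian forward model (\ref{f:model:lin}), ${\bf x}\mapsto f({\bf y}|{\bf x})$ is convex, differentiable, and has a Lipschitz gradient with constant $M = \|A\|_{\textsf{op}}^2/v^2$, so any $\gamma\in(0,M^{-1}]$ is admissible. Orthogonality of $B$ reduces $\R$ to $\R({\bf x}) = \lambda_1\|B{\bf x}\|_1 + (\lambda_2/2)\|{\bf x}\|_2^2$, which is $\lambda_2$-strongly convex. The standard theory of proximal gradient descent on a strongly convex, Lipschitz-smooth objective then yields both uniqueness of $g({\bf y})$ and linear convergence at a rate $\varrho_n<1$ determined by the condition number of the objective, giving both parts of H\ref{H2:dl}.

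The heart of the argument is H\ref{H3:dl}, which I would establish by exhibiting an \emph{exact} realization of $\Prox^{\gamma\R}$ by a depth-$4$ ReLu network of layer sizes $(d_x, d_x, 2d_x, d_x, d_x)$. Combining the proximal-calculus identity (\ref{eq:prox}) (available because $B$ is orthogonal) with the explicit form of $\mathsf{s}_\gamma$ displayed in the text gives
\[
\Prox^{\gamma\R}({\bf x}) \;=\; B^{-1}\Bigl[\mathsf{ReLu}\!\Bigl(\tfrac{B{\bf x} - \gamma\lambda_1\mathbf{1}}{1+\gamma\lambda_2}\Bigr) - \mathsf{ReLu}\!\Bigl(\tfrac{-B{\bf x} - \gamma\lambda_1\mathbf{1}}{1+\gamma\lambda_2}\Bigr)\Bigr],
\]
so the layers (apply $B$; stack the two affine shifts into $\rset^{2d_x}$; apply ReLu; combine with $[B^{-1},\,-B^{-1}]$) reproduce the map identically. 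Since the approximation is exact, (\ref{eq:H3}) holds for every $\epsilon>0$ at depth $D=4$, width $O(d_x)$, and sparsity $O(d_x^2)$ --- all independent of $\epsilon$ --- which is exactly H\ref{H3:dl} with $\beta_1=\beta_2=0$. Orthogonality of $B$ keeps all weight entries inside $[-1,1]$; the bias shifts $\gamma\lambda_1/(1+\gamma\lambda_2)$ are absorbed via the enlarged-weight convention introduced just after (\ref{H:fun}); and (\ref{eq:H3:2}) can be enforced by the layer-normalization trick noted in Remark \ref{rem:assump:H3dl}.

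The main care point will be the bookkeeping in H\ref{H3:dl}: one must confirm that the $\|W\|_\infty \leq 1$ and bias-absorption constraints survive if $\gamma\lambda_1$ is not small, which is easily handled either by shrinking $\gamma$ or by rescaling the input layer. Once H\ref{H1:dl}--H\ref{H3:dl} are in place with $\beta_2=0$, the claimed bound is a direct instantiation of (\ref{eq:thm:2}) in Theorem \ref{thm:2}.
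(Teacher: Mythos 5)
Your proposal is correct and follows essentially the same route as the paper: the paper's own derivation (in the paragraph preceding the corollary) verifies H\ref{H2:dl} via strong convexity of the elastic-net regularizer and H\ref{H3:dl} via an exact depth-$4$ ReLu realization of $\Prox^{\gamma\R}$ through the proximal-calculus identity (\ref{eq:prox}), yielding $\beta_1=\beta_2=0$ and hence the rate $D'/\sqrt{n}$ from Theorem \ref{thm:2}. Your additional bookkeeping on the $\|W\|_\infty\leq 1$ constraint and the bias absorption merely makes explicit what the paper leaves implicit.
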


\section{Numerical illustration}\label{sec:illust}
We illustrate our theoretical results with a toy  example, a simulation and a real data deblurring problem. For all  examples we draw  samples from the posterior distribution  (\ref{inverse:glut:post})  using the Sparse Asynchronous Stochastic Gradient Langevin Dynamics (SA-SGLD) sampler of \cite{atchade:wang:21}, an approximate MCMC sampler designed  for posterior distributions of the form (\ref{inverse:glut:post}), that employs asynchronicity for fast sampling. For more details on the approximate correctness of the sampler we refer the reader to \cite{atchade:wang:21}. Computationally the SA-SGLD sampler is implemented at the cost of 2 back-propagation through the GDN per MCMC iteration. 

\subsection{Learning the Elastic Net regression map}\label{sec:ex1}
In this section, we illustrate our theoretical results with the example of learning the elastic-net regularization to solve a linear regression model. 

\textbf{Data generation: }\textbf{Data generation: }We generate a dataset $\D = \{({\bf x}_i,{\bf y}_i),\;1\leq i\leq n\}$  where ${\bf x}_i\stackrel{i.i.d.}{\sim}\mu$, and ${\bf y}_i\vert {\bf x}_i\sim\textbf{N}(A{\bf x}_i,v^2\mathbf{I}_{d_y})$. The entries of the matrix $A$ are generated independently from the standard normal distribution, and we set $v^2=0.001$. We choose $\mu(\rmd {\bf x})\propto e^{-\R_0(B{\bf x})} \rmd{\bf x}$ as in (\ref{ex:mu}), where $\R_0$ is the elastic net density as in (\ref{eq:en}), and $B=\mathbf{I}_{d_x}$.  We set  $n=200$, $d_x =d_y = 100$, and $\lambda_1=1$, $\lambda_2=1$. 

\textbf{Model architecture}. We specify $H_W$ as a FNN with depth $D=2$, and layer $(p_0,p_1,p_2)= (d_x, 2d_x, d_x)$. To prevent overfitting, we specify the models to only learn the parameters that connect the nodes between layers in each dimension of ${\bf x}_i$, which reduces the number of model parameters to $q = 7*d_x = 700$. We consider several values of the unrolling depth $D'$: $D'=1$ (GDN1), $D'=5$ (GDN2), $D'=10$ (GDN3), and $D'=20$ (GDN4) for comparison\footnote{The step-size $\gamma$ of GDN is taken as $\gamma_1 = \frac{2 v^2}{\lambda_{\textsf{max}}(A'A)}$ a, where $\lambda_{\textsf{max}}(A'A)$ is the largest eigenvalue of $A'A$.}.  

\textbf{Training details:} For the Bayesian prior we choose $\rho_0=n$, $\rho_1 =1$, and $\mathsf{u}=1$. We choose $\sigma = 0.001$ in (\ref{inverse:glut:post}), and run the SA-SGLD with a constant step-size $2*10^{-6}$ for GDN. The mini-batch size is set to $100$. The initial value ${\bf x}^{(0)}$ of GDN is set to $0$. The MCMC sampler is implemented in \textsf{Pytorch} on a high-performance computer with a Nvidia Tesla V100 GPU. We run the sampler for $10^4$ iterations. 

\textbf{Evaluation procedure:} For the comparison, we generate $1000$ test samples and evaluate the prediction errors of the resulting estimator $g_{\Lambda\odot W}$, where $(\Lambda,W)\sim \Pi(\cdot\vert\D)$. The performance of GDN are compared to the performance of $g$, which in this problem is easily calculated by proximal gradient descent. We do this comparison by computing the error
\[e(\Lambda,W) =\frac{1}{1000}\sum_{i=1}^{1000} \|g_{\Lambda\odot W}({\bf y}_i) - g({\bf y}_i)\|_2,\]
where the average is taking over the test sample.  The boxplots in Figure \ref{fig:en:loss} show the distributions of the errors obtained by taking $500$ samples of $(\Lambda,W) $ along the MCMC sampler. In this toy example, $D'=10$ (GDN3)  yields the best results and as predicted by our theory, GDN deteriorates as the unrolling depth increases.

\medskip
\begin{figure}[h]
	\includegraphics[scale=0.85]{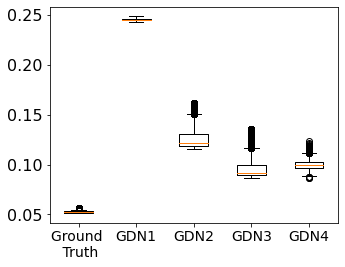}
	\caption{Distributions of prediction errors on test samples: GDN1 (D'=1), GDN2 (D'=5) and GDN3 (D'=10), GDN4 (D'=20).
	} 
	\label{fig:en:loss}
\end{figure}

\medskip

\subsection{Illustration with a simulated data deblurring problem}\label{sec:ex2}
Image deblurring is a common inverse problem in computational imaging. Here for illustration, we start with a simulated dataset $\D$ for experimental purposes.

\textbf{Data generation. }We generate $n=500$  images of size $16\times 16$,  that we then blurred using a Gaussian blurring convolution kernel with variance $3$ without restriction on the kernel range\footnote{Each image is $16 \times 16$ matrix partitioned into 4 blocks where the upper left and lower right are $8\times 8$ diagonal matrices with diagonal elements sampled from $N(20, 0.5)$ and $N(-10, 0.1)$ respectively, where the upper right is a $8\times 8$ matrix with entries sampled from $N(10, 0.1)$, and the lower left a $8\times 8$ matrix with entries sampled from $N(-10, 5)$.}. 

\textbf{Model architecture.} We construct $H_W$ in (\ref{H:fun}) as a 3-layer relu-convolutional neural network\footnote{consisting of 3 convolutional layers with respective sizes $3,3,1$, respective number of filters $32,64,1$. Each layer except the last layer is followed by a \textsf{LayerNorm} layer and a \textsf{ReLu} layer}. The total number of parameters is $q=18,881$. We evaluate the GDN model at unrolling depth $D' = 2$ (GDN1), $D' = 4$ (GDN2), $D'=12$ (GDN3) and $D'=24$ (GDN4), and we do a comparison with two feedforward convolutional neural network (FNN) that do not make use of the forward problem. The first FNN has the same  architecture as $H_W$  (FNN1), while the second is a 6-layer FNN with  total number of parameter $q = 136,641$ (FNN2)\footnote{The 6-layer network consists of 2 convolutional layers, 1 channel-wise fully connected layer, 3 deconvolutional layers with respective sizes $5, 3, 2, 4, 5, 3$, respective number of filters $32, 64, 64, 64, 32, 1$. Each layer except the last layer is followed by a \textsf{LayerNorm} layer and a \textsf{ReLu} layer.}. All the layers are padded to keep the image size constant.

\textbf{Training details: } For the Bayesian prior, we use $\rho_0=n, \rho_1 =1$, and $\mathsf{u}=8000$. We choose $\sigma^2 = 0.01$ in (\ref{inverse:glut:post}), and run the SA-SAGLD with a constant step-size $2 \times 10^{-8}$ for both FNN and GDN. The mini-batch size is set to 50 in both cases. The MCMC sampler are implemented on a Nvidia Tesla V100 GPU system with 384 GB GPU memory running \textsf{Pytorch}. We run both samplers for $10^4$ iterations.

\textbf{Evaluation procedure. }
We generate 500 test samples to evaluate the prediction errors of the six models. The boxplots in Figure \ref{fig:ex2:bp} show the distribution of the mean square error (same as \ref{sec:ex1}) of the last 2000 samples of $(\Lambda, W)$ along the MCMC sampler of each model. Figure \ref{fig:ex2:recon} shows an example of  reconstruction from FNN1, FNN2, GDN1, and GDN3. We observe that GDN outperforms FNN1, and can achieve similar performance as FNN2 when the unrolling depth is appropriately selected (not too small, nor too large). The experiment again confirms the importance of scaling appropriately the unrolling depth as highlighted in our theoretical results.

\begin{figure}[h!] 
\centering 
\includegraphics[scale = 0.55]{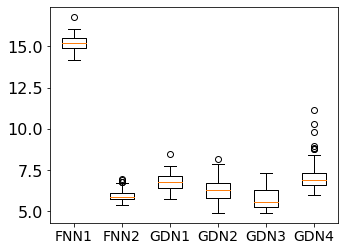}  
\caption{Test loss comparison between FNN1 (3 conv), FNN2 (3 conv + 1 cfc + 3 deconv), GDN1 ($D'=2$), GDN2 ($D'=4$), GDN3 ($D'=12$) and GDN4 ($D'=24$)} 
\label{fig:ex2:bp}
\end{figure}

\begin{figure}[h!] 
\centering 
\includegraphics[scale = 0.4]{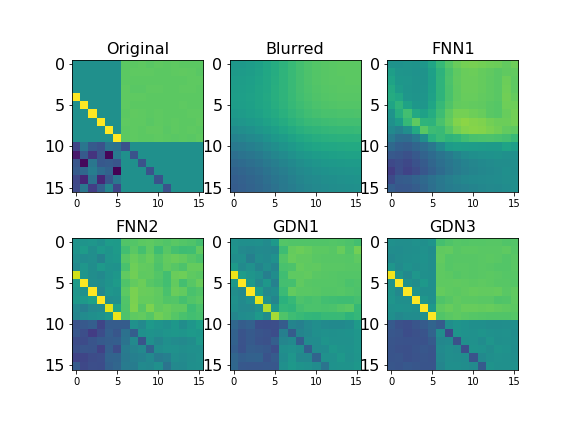}  
\caption{Reconstruction result from FNN1 (upper right), FNN2 (lower left), GDN1 $D'=2$ (lower middle) and GDN3 $D'=12$ (lower right)} 
\label{fig:ex2:recon}
\end{figure}

\subsection{Illustration with CelebA dataset}\label{ex:celebA}
We extend the last example to the deblurring of CelebA images \cite{celebA}.

\textbf{Data generation. }We randomly select $20,000$ images from the celebA dataset that we resize  to $64 \times 64$.  We generate the corresponding observed measurements ${\bf y}_i$ through the linear forward model (\ref{f:model}), where $A$ is a Gaussian blurring convolution matrix  with variance $6.25$, and where $v^2=0.01$ leading to a highly ill-conditioned inverse problem.  

\textbf{Model architecture. } We take $H_W$ in (\ref{H:fun}) as a 3-layer relu-convolutional neural network\footnote{with  kernel sizes $4,4,2$, and the number of filters that we take here as $256, 256, 1$. All padded to keep image size constant}. The depth of the GDN is either $D'=4$ (GDN1), $D'=12$ (GDN2), or $D'=24$ (GDN3). The total number of parameters in the same in all three cases and equal to $q=267,777$. We compare this model with a  feedforward architecture that doe not make use of the forward model\footnote{With 3 convolutional layers (with size $4,4,2$, filter number $256,256,32$, and strides $1,2,1$) followed by 4 corresponding deconvolutional layers (with kernel size $2,4,4,2$, filter number $32,256,256,1$, and strides $1,2,1,1$).}. The total number of parameter of the FNN is  $q=2,271,297$. 

\textbf{Training details: } In the MCMC, the mini-batch size taken as $B=164$, and the step-size is taken  as $\gamma=10^{-9}$ for FNN, GDN2, and GDN3, and $\gamma=10^{-8}$ for GDN1. We run the algorithms for $80,0000$ iterations using a high-performance computier with a Nvidia Tesla V100 GPU running \textsf{Matlab 2022a}.

\textbf{Evaluation procedure. }Figure \ref{fig:ex:celebA1} shows the distributions of the test error from $500$ drawn from the MCMC sampler after burn-in. We see again that at appropriate depth GDN matches FNN. However we see a decrease in performance at deeper depth $D'=24$, which again suggests overfitting. Figure \ref{fig:ex:celebA2} shows three examples of  reconstructed images.

\begin{figure}[h!] 
\centering 
\includegraphics[scale = 0.45]{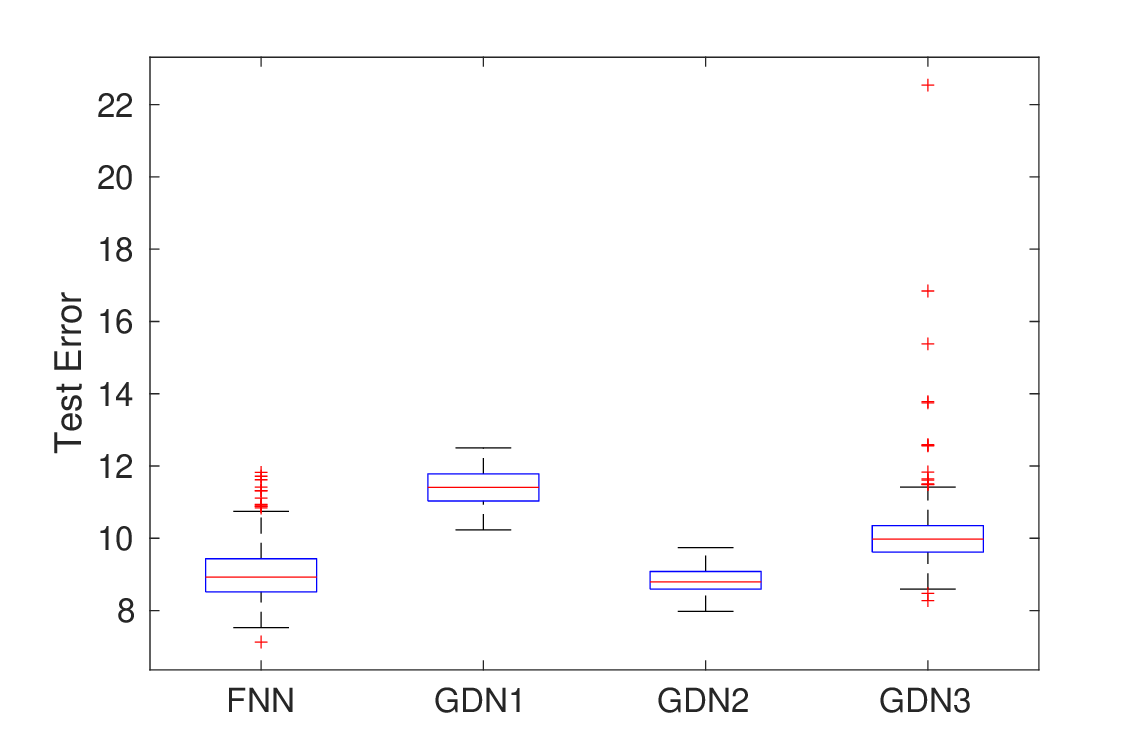}  
\caption{Test loss comparison between FNN (3 conv + 4 deconv), GDN1 (D'=4), GDN2 (D'=12) and GDN3 (D'=24)} 
\label{fig:ex:celebA1}
\end{figure}

\begin{figure}[h!] 
\centering 
\includegraphics[scale = 0.7]{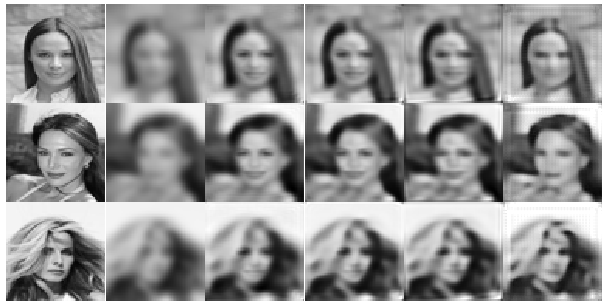}  
\caption{Some random examples of reconstructions. From left to right: CelebA image, blurred image, GDN1, GDN2, GDN3, and FNN} 
\label{fig:ex:celebA2}
\end{figure}

\section{A general Bayesian posterior contraction result}\label{sec:gen:post:contraction}
Theorem \ref{thm:2} is derived as special cases of a more general result of independent interest that we establish in this section. We consider again the regression model (\ref{au:model}), where  $\{g_W,\;W\in\W\}$ is some arbitrary  deep neural network function class.  We assume that the parameter space is $\W \eqdef \rset^{p_D \times p_{D-1}}\times \cdots\times \rset^{p_1 \times p_{0}}$, for some depth $D\geq 1$, and layer dimensions $p_0,p_1,\ldots,p_D\geq 1$.  As indicated at the end of the introduction, at times we shall view $\W$ as the Euclidean space $\rset^{q}$, with Euclidean norm denoted $\|\cdot\|_2$, where
\[q\eqdef \sum_{\ell=1}^D (p_\ell\times p_{\ell-1}).\]
We make the following local Lipschitz assumption on the function class.

\begin{assumption}\label{H5:dl}
For all  $0<\eta<\infty$, there exists  $L(\eta)\geq 1$ such that for all $W,W'\in\W$ that satisfy $\max(\|W\|_{2}, \|W'\|_{2}) \leq\eta$, and for all ${\bf y}\in\Yset$, we have
\begin{equation}\label{cond:grad}
\|g_W({\bf y}) - g_{W'}({\bf y})\|_2 \leq L(\eta)\|W - W'\|_2.
\end{equation}
\end{assumption}

The constant $L(\eta)$ is a local Lipschitz constant of the function $W\mapsto g_W({\bf y})$. Controlling appropriately these local Lipschitz constants is a major theoretical challenges in dealing with deep neural networks.  

 
\begin{theorem}\label{thm:main}
Suppose that the dataset $\D$ is generated as in H\ref{H1:dl}, and consider the nonparametric regression (\ref{au:model})  for some function class $\{g_W,\;W\in\W\}$ that satisfies H\ref{H5:dl}, and the corresponding posterior distribution (\ref{inverse:glut:post}).  Suppose that the regression variance parameter $\sigma$ satisfies $\sigma\geq \bar\sigma$. Let $\varpi_\star\geq 0$, $s_\star\geq 1$, be such that
\begin{equation*}
 \min\;\left\{\|g_{ W } - g\|_\infty,\; W \in\W\;\;\mbox{s.t.}\;\; \|W\|_{0} \leq s_\star,\;\;\|W\|_\infty\leq 1\right\} \leq \varpi_\star, \end{equation*}
and  set $\mathsf{L}_\star \eqdef L(2s_\star^{1/2})$, where the function $L$ is as in H\ref{H5:dl}. Define
\begin{equation*}
s \eqdef \left(1+ \frac{\log(\mathsf{L}_\star\sqrt{n})}{\mathsf{u}\log(q)}\right) s_\star +  \frac{4n\varpi_\star^2}{\sigma^2\mathsf{u}\log(q)},\;\mbox{ and }\; 
\r \eqdef \bar\sigma\sqrt{\frac{s\log(q) + s\log( \mathsf{L}_s)}{n}},\end{equation*}

where 
\[\mathsf{L}_s \eqdef L(s^{1/2}b_s),\;\;\mbox{with }\;\; b_s\eqdef \sqrt{2(1+\mathsf{u})(1+s)\log(q)}.\]
Then for all $q$ large enough, and $n\geq \sigma^2\log(q)$, we can find a constant $M^2\geq \mathsf{u}\max((\sigma/\bar\sigma)^2,1)$, and absolute constant $c_1$ such that
\begin{equation}\label{conclusion:thm1}
\Pi\left(\|g_{\Lambda\odot W} - g\|_n > M\r \;\vert \D\right) \leq \frac{12}{q},\end{equation}
with probability at  least $1-e^{-c_1 n} -\frac{c_1}{q}$.
\end{theorem}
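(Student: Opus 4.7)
The plan is to apply the classical Ghosal-van der Vaart posterior contraction recipe (prior mass + sieve + exponential tests), adapted to spike-and-slab priors as in \cite{AB:19}. The approximation hypothesis in the statement produces a ``target'' $W^\star$ with support $\Lambda^\star$ such that $\|\Lambda^\star\|_0 \leq s_\star$, $\|W^\star\|_\infty \leq 1$, and $\|g_{W^\star} - g\|_\infty \leq \varpi_\star$; this target will anchor both the denominator lower bound and the choice of sieve.

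For the denominator $Z$ of $\Pi(\cdot \vert \D)$, I would restrict the defining integral to $\Lambda = \Lambda^\star$ and $W$ in the Euclidean ball $\mathcal{B}$ of radius $\asymp \r/\mathsf{L}_\star$ around $W^\star$, on which H\ref{H5:dl} gives $\|g_W - g\|_n^2 \lesssim \r^2 + \varpi_\star^2$. The conditional sub-Gaussianity of $\bm{\xi}_i$ in H\ref{H1:dl} controls the cross term $\sigma^{-2}\sum_i \pscal{\bm{\xi}_i}{g_W({\bf y}_i) - g({\bf y}_i)}$ via a Hoeffding-type inequality, while the prior mass of $\{\Lambda^\star\} \times \mathcal{B}$ is at least $q^{-(\mathsf{u}+1) s_\star}$ times the Gaussian slab mass on $\mathcal{B}$, which is of order $(\r/(\mathsf{L}_\star\sqrt{s_\star}))^{s_\star}$ up to constants. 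Combining these pieces yields, with probability at least $1 - e^{-c_1 n}$,
\[
\log Z \;\geq\; -c\left(\frac{n\varpi_\star^2}{\sigma^2} + s_\star\log q + s_\star\log(\mathsf{L}_\star\sqrt n)\right),
\]
and the precise definition of $s$ in the statement is calibrated so that the right-hand side is absorbed into a term comparable to $c n \r^2/\sigma^2$.

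For the numerator, define the sieve $\mathcal{S}_s = \{\|\Lambda\|_0 \leq s\} \cap \{\|W_\Lambda\|_\infty \leq b_s\}$. The spike prior assigns mass $\lesssim q^{-\mathsf{u} s}$ to $\{\|\Lambda\|_0 > s\}$, and the $\rho_1 = 1$ slab makes $\{\|W_\Lambda\|_\infty > b_s\}$ exponentially small in $s\log q$; combined with the denominator bound this gives $\Pi(\mathcal{S}_s^c \vert \D) \lesssim 1/q$. Inside $\mathcal{S}_s$, H\ref{H5:dl} bounds the $\|\cdot\|_n$-covering number of $\{g_W : (\Lambda, W) \in \mathcal{S}_s\}$ at scale $\r$ by
\[
\binom{q}{s}\left(\frac{C\mathsf{L}_s b_s \sqrt s}{\r}\right)^s \leq \exp\bigl(C' s(\log q + \log \mathsf{L}_s)\bigr).
\]
Standard Hellinger-type tests at each covering center, based on differences of squared residuals and the conditional sub-Gaussian concentration of $\bm{\xi}_i \mid {\bf y}_i$, have combined error $\leq e^{-c M^2 n \r^2/\bar\sigma^2}$. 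A union bound over the covering, together with the denominator bound and the balancing identity $n \r^2/\bar\sigma^2 \asymp s(\log q + \log \mathsf{L}_s)$ that defines $\r$, yields (\ref{conclusion:thm1}) once $M^2$ is taken larger than $C'/c$ and than $\mathsf{u}\max((\sigma/\bar\sigma)^2,1)$.

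The most delicate point is the fixed-point coupling between $s$, $b_s$, and $\mathsf{L}_s = L(s^{1/2} b_s)$: because the Lipschitz constant controlling the covering number is itself evaluated at a radius that grows with $s$, the rate $\r$ must be shown self-consistent with the sieve radius, which is exactly what forces the specific logarithmic correction $(1+\log(\mathsf{L}_\star\sqrt n)/(\mathsf{u}\log q))s_\star$ and the bias-to-sparsity translation $4n\varpi_\star^2/(\sigma^2 \mathsf{u}\log q)$ in the definition of $s$. A secondary technicality is that the noise is only conditionally sub-Gaussian rather than Gaussian, so the denominator lower bound cannot proceed through exact KL computations and must instead rely on sub-Gaussian concentration inequalities; this is also why the rate is expressed through $\bar\sigma$ rather than $\sigma$.
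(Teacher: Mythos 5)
Your proposal is correct in substance and shares the paper's three-part skeleton --- a denominator lower bound anchored at a sparse approximant $W^\star$ (the paper's Lemma \ref{lem:nc:dl}, with neighborhood radius $\eta \asymp \sigma\sqrt{\log(q)/n}/\mathsf{L}_\star$, essentially your $\r/\mathsf{L}_\star$), a sieve $\Theta(s,b_s)$ whose complement is killed by prior mass plus the observation that $\PE[f_{\Lambda\odot W}/f_\star\,\vert\,{\bf y}_{1:n}]\leq 1$ when $\sigma\geq\bar\sigma$ (Lemma \ref{lem:prior:contr}), and a uniform control on the sieve --- but it routes the third ingredient differently. You invoke the Ghosal--van der Vaart recipe with exponential tests at the centers of a single-scale $\r$-covering; the paper instead follows \cite{shen:wasserman:01} (Lemma \ref{basic:lem}) and directly bounds $\sup_W[\log f_W(\D)-\log f_\star(\D)]$ over annuli $jM\r<\|g_W-g\|_n\leq(j+1)M\r$ via a van de Geer--style peeling-plus-chaining argument (Lemma \ref{lem:unif:b}), with a Dudley entropy-integral condition in place of your single-scale union bound. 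For this parametric-type entropy ($\log\N(\epsilon,\cdot)\lesssim s\log(q\mathsf{L}_s b_s/\epsilon)$) the two give the same rate up to constants absorbed into $\log\mathsf{L}_s$, and indeed the paper itself collapses its entropy integral to a single scale when verifying the condition. Two points of care if you carry out your version: (i) the covering must be taken at the quadratically small scale $\asymp\r^2/\bar\varsigma$, not at scale $\r$, because the discretization error of the linear term $\sigma^{-2}\sum_i\pscal{\bm{\xi}_i}{g_W({\bf y}_i)-g_{W_0}({\bf y}_i)}$ is controlled by Cauchy--Schwarz on the event $\sum_i\|\bm{\xi}_i\|_2^2\lesssim n\bar\varsigma^2$ and must be dominated by $n\r_j^2/\sigma^2$ --- this only perturbs logarithmic factors here, but it is why $\bar\varsigma$ (not just $\bar\sigma$) enters the paper's entropy condition; and (ii) since the Gaussian working likelihood is misspecified, your ``Hellinger-type tests'' should be read as indicator tests on the sign of the squared-residual difference with errors computed under the true conditionally sub-Gaussian law, at which point they coincide with the paper's direct bound on the log-likelihood-ratio process, so nothing is lost. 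What the Shen--Wasserman formulation buys is precisely that this identification is built in from the start; what your formulation buys is closer contact with the standard contraction literature.
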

\begin{proof}
See Section \ref{sec:proof:thm:main}.
\end{proof}
 
\begin{remark}
Theorem \ref{thm:main} applies well beyond the GDN of interest in this work. For any function class $\{g_W,\;W\in\W\}$  trained under the proposed sparse spike-and-slab prior, one can read off the posterior contraction rate of $\Pi(\cdot\vert\D)$ from Theorem \ref{thm:main}. The rate is driven by  the local Lipschitz constant $L(\eta)$  of the function class, and the relationship between $(s_\star,\beta_\star)$ and $\varpi_\star$, which captures the approximation capability of the function class.
\end{remark}

\subsection{Sketch of the proof of Theorem \ref{thm:main}}
To improve readability we give here a high-level description of the proof of Theorem \ref{thm:main}. Several approaches have been developed in the literature to study the contraction of posterior distributions.  Here  we follow an approach due to \cite{shen:wasserman:01}.  The merit of their approach is that it makes a direct connection between the contraction properties of the posterior distribution and the properties of the corresponding log-likelihood empirical process.

Let $f,\;\{f_\theta,\;\theta\in\Theta\}$ be a family of densities on a measurable space $\Zset$ equipped with a reference sigma-finite measure that we write as $\rmd z$. All densities considered on the sample space $\Zset$ are defined  with respect to $\rmd z$.  The parameter space $\Theta$ is some arbitrary measurable space.  Let $\pi$ be a prior probability measure on $\Theta$.  We consider the posterior distribution of $\theta$  given by
\[\Pi(A\vert z)  = \frac{\int_A f_\theta(z) \pi(\rmd \theta)}{\int_{\Theta}f_\theta(z) \pi(\rmd \theta)},\;\; A \mbox{ meas.},\;\;z\in\Zset.\]
The next lemma is a generalization of \cite{shen:wasserman:01}, and summarizes the main arguments used in the proof of  Theorem \ref{thm:main}.
\begin{lemma}\label{basic:lem}
Let $S,B$ and $\{\Xi_k,\;k\geq 1\}$ be measurable subsets of $\Theta$, such that $S\cap B^c\subseteq \cup_{k\geq 1} \Xi_k$.  Let $\beta>0$, $\rho\geq 0$ and $\{r_j,\;j\geq 1\}$ a sequence of positive numbers. Let  $\e$ be any subset of $\Zset$ such that
\begin{multline}\label{event:e}
\e\subseteq\left \{z\in \Zset:\; \int_{\Theta}\frac{f_\theta(z)}{f(z)} \pi(\rmd \theta) \geq e^{-\beta},\;\;\int_{S^c}\frac{f_\theta(z)}{f(z)} \pi(\rmd \theta) \leq \rho\right.\\
\left.\;\;\;\;\mbox{ and }\;\;\;\; \sup_{\theta\in \Xi_j}\; \left[ \log f_\theta(z) - \log f(z)\right] \leq -r_j \;\;\mbox{ for all }\;\; j\geq 1\right\}.
\end{multline}
Then for all $z\in\e$, we have
\begin{equation}\label{basic:lem:eq1}
\Pi(B^c \vert z) \leq e^{\beta} \left(\rho + \sum_{j\geq 1} e^{-r_j}\right).\end{equation}
\end{lemma}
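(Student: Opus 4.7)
The plan is direct: manipulate the posterior ratio by dividing numerator and denominator by $f(z)$, split the numerator's integration domain into $B^c\cap S$ and $B^c\cap S^c$, and apply the three conditions defining $\e$ to each resulting piece. No delicate estimates are required; the lemma is essentially a bookkeeping statement that packages the Shen--Wasserman argument.

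First I would write, for any $z\in\Zset$ with positive normalizing constant,
\[
\Pi(B^c\mid z) \;=\; \frac{\int_{B^c} \frac{f_\theta(z)}{f(z)}\,\pi(\rmd\theta)}{\int_{\Theta}\frac{f_\theta(z)}{f(z)}\,\pi(\rmd\theta)}.
\]
By the first condition in the definition of $\e$, whenever $z\in\e$ the denominator is bounded below by $e^{-\beta}$, so it suffices to control the numerator.

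Next I would decompose $B^c = (B^c\cap S)\cup(B^c\cap S^c)$ and bound each integral separately. For the $S^c$ piece, since $B^c\cap S^c\subseteq S^c$, the second condition in the definition of $\e$ immediately gives
\[
\int_{B^c\cap S^c}\frac{f_\theta(z)}{f(z)}\,\pi(\rmd\theta)\;\leq\;\int_{S^c}\frac{f_\theta(z)}{f(z)}\,\pi(\rmd\theta)\;\leq\;\rho.
\]
For the $S$ piece, I would use the hypothesis $S\cap B^c\subseteq \cup_{k\ge 1}\Xi_k$ together with the third condition: on each $\Xi_j$ and for $z\in\e$ we have $f_\theta(z)/f(z)\leq e^{-r_j}$, whence
\[
\int_{B^c\cap S}\frac{f_\theta(z)}{f(z)}\,\pi(\rmd\theta)\;\leq\;\sum_{j\geq 1}\int_{\Xi_j}\frac{f_\theta(z)}{f(z)}\,\pi(\rmd\theta)\;\leq\;\sum_{j\geq 1}e^{-r_j}\pi(\Xi_j)\;\leq\;\sum_{j\geq 1}e^{-r_j}.
\]

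Combining the two numerator bounds with the denominator lower bound yields $\Pi(B^c\mid z)\leq e^{\beta}\bigl(\rho+\sum_{j\geq 1} e^{-r_j}\bigr)$ for every $z\in\e$, which is exactly \eqref{basic:lem:eq1}. There is no genuinely hard step; the only mild subtlety is to notice that the covering assumption $S\cap B^c\subseteq\cup_k\Xi_k$ is applied only after the $S$/$S^c$ split, so the $\Xi_j$'s need only cover the intersection $S\cap B^c$ rather than all of $B^c$, which is what makes the lemma flexible enough to be useful in the proof of Theorem~\ref{thm:main}.
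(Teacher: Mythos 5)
Your proposal is correct and follows essentially the same route as the paper's proof: lower-bound the denominator by $e^{-\beta}$ via the first condition, split the numerator over $B^c\cap S^c$ and $B^c\cap S$, bound the former by $\rho$ and the latter by $\sum_j e^{-r_j}\pi(\Xi_j)\leq\sum_j e^{-r_j}$ using the covering and the sup condition. No discrepancies to flag.
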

\begin{proof}
Using the lower bound on the normalizing constant provided by the event (\ref{event:e}), for $z\in\e$, we have
\begin{multline*}
\Pi(B^c\vert z) = \frac{\int_{B^c}  \frac{f_\theta(z)}{f(z)} \pi(\rmd \theta)}{\int_{\Theta} \frac{f_\theta(z)}{f(z)} \pi(\rmd \theta)}\leq e^{\beta}\left(\int_{S^c}  \frac{f_\theta(z)}{f(z)} \pi(\rmd \theta) + \int_{S\cap B^c}  \frac{f_\theta(z)}{f(z)} \pi(\rmd \theta)\right)\\
\leq e^{\beta}\left(\rho+ \int_{S\cap B^c}  \frac{f_\theta(z)}{f(z)} \pi(\rmd \theta)\right).
\end{multline*}
Furthermore,  for $z\in\e$, the last integral in the last display satisfies
\[\int_{S\cap B^c} \frac{f_\theta(z)}{f(z)} \pi(\rmd \theta)  \leq  \sum_{j\geq 1} \int_{\Xi_j}\exp\left(\log f_\theta(z) - \log f(z)\right)\pi(\rmd\theta)\leq \sum_{j\geq 1} e^{-r_j}\pi(\Xi_j).\]
Equation (\ref{basic:lem:eq1}) follows by collecting the terms.
\end{proof}
\bigskip
\begin{remark}From the  lemma we are left with the problem of finding $\rho,\beta, \{r_j,\;j\geq1\}$ such that the right hand size of Equation (\ref{basic:lem:eq1}) is small and $\PP(Z\notin \e)$ is small.

\end{remark}

\section{Concluding remarks}
There is a need for a deeper theoretical understanding of deep learning models. We have focused here  on a class of algorithm unrolling models for inverse problems. And we have shown that for convex inverse problems and under a concentration of measure assumption, GDN can recover the inverse map at optimal rate, provided that the unrolling depth is appropriately tuned. These findings are confirmed in our numerical example. Our results also suggest that algorithm unrolling models are prone to overfitting as the unrolling depth $D'$ increases. The theoretical results are obtained as special cases of a more general posterior contraction result for Bayesian deep learning. 

One natural question is whether our analysis extends beyond the concentration of measure  assumption in Assumption \ref{H1:dl}.  
Without the content of H\ref{H1:dl}, a more sensible approach would be to estimate the entire conditional distribution, not just its mean. Several recent works have proposed to extend algorithm unrolling architectures for conditional density estimation in inverse problems (\cite{ardizzone:2018}). Extending our analysis to these conditional density models is an important direction for future research.

Another outstanding challenge not addressed in this work is the computational and memory cost of implementing algorithm unrolling models. Our results suggest that fairly deep (but not too deep) networks are typically needed for optimal performance of GDNs. In practice, the gradient of the loss with respect to $W$ in (\ref{inverse:glut:post}) is typically computed by back-propagation through the entire network of depth $D\times D'$, at a memory cost of order $O(D\times D')$. This often puts severe limitations on the unrolling depth that can be considered  (\cite{putzky:etal:19}). Mitigating this memory cost and easing the implementation of algorithm unrolling architectures (for instance by developing  specialized back-propagation algorithms) is another  important problem for future research.

\bibliography{biblio,biblio_optim,biblio_mcmc,biblio_graph,biblio_RT}
\bibliographystyle{icml2022}


\appendix

\section{Proofs}\label{sec:proofs}
\subsection{Proof of Theorem \ref{thm:main}}\label{sec:proof:thm:main}
\begin{proof}
We follow the same general steps outlined above in Lemma \ref{basic:lem}. We recall that the dataset is $\D\eqdef ({\bf x}_1,{\bf y}_1),\ldots,({\bf x}_n,{\bf y}_n)$. For $W\in\W$, we define 
\begin{multline}\label{def:ffstar}
f_{ W }(\D) \eqdef \exp\left( -\frac{1}{2\sigma^2}\sum_{i=1}^n \|{\bf x}_i - g_{ W }({\bf y}_i)\|_2^2\right),\;\;\\
\mbox{ and }\;\; f_{\star}(\D) \eqdef \exp\left( -\frac{1}{2\sigma^2}\sum_{i=1}^n \|{\bf x}_i - g({\bf y}_i)\|_2^2\right).\end{multline}
We recall that $\Theta=\W\times \mathcal{S}$. For any measurable set $A\subset\Theta$, we can write the posterior probability $\Pi(A\vert \D)$ as 
\begin{equation}\label{Pi:alternative}
\Pi(A\vert \D) = \frac{\int_A\frac{f_{ \Lambda \odot W }(\D)}{f_\star(\D)}\Pi_0(\rmd  \Lambda ,\rmd  W )}{\int_{\Theta}\frac{f_{ \Lambda \odot W }(\D)}{f_\star(\D)}\Pi_0(\rmd  \Lambda ,\rmd  W )}.\end{equation}

We will repeatedly use the following observation. For $ W \in \W $, we have
\begin{multline}\label{log:ll:eq}
\log\left(\frac{f_{ W }(\D)}{f_\star(\D)}\right) = \frac{1}{2\sigma^2}\sum_{i=1}^n \left(\|{\bf x}_i -g({\bf y}_i)\|_2^2 - \|{\bf x}_i -g_{ W }({\bf y}_i)\|_2^2\right)\\
 = - \frac{n}{2\sigma^2}\|g_{ W } - g\|_n^2 - \frac{1}{\sigma^2}\sum_{i=1}^n\pscal{{\bf x}_i - g({\bf y}_i)}{ g({\bf y}_i) - g_{ W }({\bf y}_i)}.\end{multline}

Given $s_0\geq 1$,  $\beta_0\geq 0$, we set
\[\Theta(s_0,\beta_0)\eqdef\left\{( \Lambda , W )\in \Theta :\;\|\Lambda\|_0 \leq s_0,\;\mbox{ and} \; \|\Lambda\odot W\|_\infty \leq \beta_0\right\},\]
and
\[\W(s_0,\beta_0) \eqdef  \left\{ W \in \W :\; \|W\|_0 \leq s_0,\;\;\ \|W\|_\infty\leq \beta_0\right\}.\]
We set
\[s \eqdef  \frac{1}{\mathsf{u}} + \left(1+ \frac{5}{\mathsf{u}} + \frac{\log(\mathsf{L}_\star\sqrt{n})}{\mathsf{u}\log(q)}\right) s_\star +  \frac{4n\varpi_\star^2}{\sigma^2\mathsf{u}\log(q)},\;\mbox{ and }\;\; \r \eqdef \bar\sigma\sqrt{\frac{s\log(q \mathsf{L}_s)}{n}},\]
and
\[ \bar\alpha\eqdef \mathsf{u} s -1,\]
where $\mathsf{L}_\star \eqdef L(2s_\star^{1/2})$, $\mathsf{L}_s \eqdef L(2s^{1/2}b_s)$, and  $b_{s}\eqdef \sqrt{2\rho_1^{-1}(1+\mathsf{u})(s+1)\log(q)}$, and where $L$ is as in Assumption \ref{H5:dl}. Fix $M\geq 2$. For $j\geq 1$ we also set
\[\W_j(s_0,\beta_0)\eqdef\{W\in\W(s_0,\beta_0):\; j(M\r)<\|g_{ W } - g\|_n \leq (j+1)M\r\}.\]
We shall apply the same idea as in Lemma \ref{basic:lem}. 
Specifically, let
\[B \eqdef\left\{( \Lambda , W )\in\Theta:\; \|g_{ \Lambda \odot  W } - g\|_n\leq M\r \right\},\] 
and consider the $\e$ 
\begin{multline*}
\e= \left\{\D:\; \int_{\Theta}\frac{f_{ \Lambda \odot W }(\D)}{f_{\star}(\D)} \Pi_0(\rmd  \Lambda ,\rmd W ) > \frac{1}{4q^{\bar\alpha}},\;\;\;\int_{\A(s)}\frac{f_{ \Lambda \odot W }(\D)}{f_\star(\D)}\Pi_0(\rmd  \Lambda ,\rmd  W )\leq \frac{1}{q^{\mathsf{u}s}} \right.\\
\left. \mbox{ and }\;\sup_{W\in \W_j(s,b_s)}\; \left[ \log f_W(\D) - \log f_{\star}(\D)\right] \leq -\frac{n(jM\r)^2}{8\sigma^2},\;\mbox{ for all }\; j\geq 1\right\},\end{multline*}
where $\A(s)$ denotes the complement of $\Theta(s,b_s)$. We note if $(\Lambda,W)\in B^c\cap \Theta(s,b_s)$, then $\Lambda \odot W\in \cup_{j\geq 1} \W_j(s,b_s)$.  Let $\check\Pi_0$ be the distribution of $\Lambda\odot W$, when $(\Lambda, W)\sim \Pi_0$. Starting from (\ref{Pi:alternative}),  and following the same argument leading to (\ref{basic:lem:eq1}),  for $\D\in\e$, we have
\begin{eqnarray*}
\Pi(B^c\vert\D ) & \leq &  4q^{\bar\alpha} \int_{B^c}\frac{f_{ \Lambda \odot W }(\D)}{f_\star(\D)}\Pi_0(\rmd  \Lambda ,\rmd  W) \\
& \leq &  4q^{\bar\alpha} \left(\int_{\A(s)}\frac{f_{ \Lambda \odot W }(\D)}{f_\star(\D)}\Pi_0(\rmd  \Lambda ,\rmd  W)  + \int_{B^c\cap \Theta(s,b_s)}\frac{f_{ \Lambda \odot W }(\D)}{f_\star(\D)}\Pi_0(\rmd  \Lambda ,\rmd  W) \right)\\
 & \leq & 4e^{\bar\alpha\log(q)}\left(\frac{1}{q^{\mathsf{u}s}}  + \int_{B^c\cap \Theta(s,b_s)}\frac{f_{ \Lambda \odot W }(\D)}{f_\star(\D)}\Pi_0(\rmd  \Lambda ,\rmd  W) \right)\\
& \leq & 4e^{\bar\alpha\log(q)}\left(\frac{1}{q^{\mathsf{u}s}}  + \sum_{j\geq 1}\int_{\W_j(s,b_s)}\frac{f_{W }(\D)}{f_\star(\D)}\check\Pi_0(\rmd  W) \right)\\
& \leq  & 4e^{\bar\alpha\log(q)}\left(e^{-\mathsf{u}s\log(q)} + \sum_{j\geq 1}e^{-\frac{n(jM\r)^2}{8\sigma^2}}\right) \\
& \leq & 4e^{\bar\alpha\log(q)}\left(e^{-\mathsf{u}s\log(q)} + 2e^{-\frac{n(M\r)^2}{8\sigma^2}}\right).
\end{eqnarray*}
By the definition of $s$ and $\r$ above, we have  $\mathsf{u}s = \bar\alpha +1$, and
\[n(M\r)^2\geq M^2\bar\sigma^2 s\log(q) = M^2\bar\sigma^2 \left(\frac{1 +\bar\alpha}{\mathsf{u}}\right)\log(q) \geq 8\sigma^2(1 +\bar\alpha)\log(q),\]
by taking $M^2\geq 8\mathsf{u}(\sigma^2/\bar\sigma^2)$.  Hence for $\D\in\e$,
\[\Pi(B^c\vert\D )\leq   \frac{12}{q}.\]
This implies that with probability at least $\PP(\D\in \e)$, we have
\[\Pi(B^c\vert\D )\leq \frac{12}{q}.\]
We show in  Lemma \ref{lem:nc:dl} below that
\[\PP\left[\int_{\Theta}\frac{f_{ \Lambda \odot W }(\D)}{f_{\star}(\D)} \Pi_0(\rmd  \Lambda ,\rmd W ) \leq \frac{1}{4q^{\bar\alpha}}\;\vert \;{\bf y}_{1:n}\right]  \leq \frac{4}{q^{s_\star}},
\]
and we show in Lemma  \ref{lem:prior:contr} below that 
\[\PP\left[\int_{\A}\frac{f_{ \Lambda \odot W }(\D)}{f_\star(\D)}\Pi_0(\rmd  \Lambda ,\rmd  W )> \frac{1}{q^{\mathsf{u}s}} \;\vert \;{\bf y}_{1:n}\right]\leq \frac{3}{q^{\mathsf{u}}}.\]
It follows that 
\begin{multline*}
\PP(\D\notin \e\;\vert\; {\bf y}_{1:n}) \leq  \frac{4}{q^{s_\star}} + \frac{3}{q^{\mathsf{u}}} \\
+ \PP\left[\bigcup_{j\geq 1}\left\{\sup_{W\in\W_j(s,b_s)}\; \left[\log f_W(\D) - \log f_\star(\D)\right] > -\frac{n(jM\r)^2}{8\sigma^2}\right\} \; \vert\; {\bf y}_{1:n}\right].\end{multline*}
By Lemma \ref{lem:unif:b}  applied  with $\W_0=\W(s,b_s)$, the rightmost term in the last display is bounded from above by $e^{-c_0n} + 4e^{-n(M\r)^2/(c_0\bar\sigma^2)}$, for some absolute constant $c_0$ provided that the term $\r$ defined above satisfies 
\begin{equation}\label{cond:r}
\frac{288}{\sqrt{n}}\int_{\frac{x^2}{32\bar\varsigma}}^x \sqrt{\log\N\left(\epsilon, \W ^{(x)}(s,b_s),\|\cdot\|_n\right)}\rmd \epsilon  \leq \frac{x^2}{\bar\sigma},\;\;\mbox{ for all }\;\; x\geq \r,\end{equation}
where for $s_0\geq 1$, $x\geq 0$,   $\beta_0\geq 0$ we define
\[\W ^{(x)}(s_0,\beta_0)\eqdef\{ W \in \W(s_0,\beta_0),\;\|g_{ W } - g\|_n \leq x\},\]
and given $\epsilon>0$, and $A\subset\W$, $\N(\epsilon,A,\|\cdot\|_n)$ denotes the cardinality  of a smallest $\epsilon$-cover of $A$ in the pseudo-metric $\|\cdot\|_n$ defined as  $\|W-W'\|_n\eqdef \|g_W - g_{W'}\|_n$.  We therefore reach the conclusion that with probability at least $1 - e^{-c_0n}  - c_1/q$, 
\[\Pi(B^c\vert\D )  \leq \frac{12}{q}.\]
for some absolute constants $c_0,c_1$. It remains to check (\ref{cond:r}). First we use the majoration
\[\int_{\frac{x^2}{32\bar\varsigma}}^x \sqrt{\log\N\left(\epsilon, \W ^{(x)}(s,b_s),\|\cdot\|_n\right)}\rmd \epsilon \leq x \sqrt{\log\N\left(\frac{x^2}{32\bar\varsigma}, \W(s,b_s),\|\cdot\|_n\right)}.\]

We recall that our notation $\|W\|_2$ denotes the Euclidean norm of the vectorized parameter $W$.  For $W\in\W(s,b_s)$, $\|W\|_2 \leq s^{1/2}b_s$. Hence, assumption H\ref{H3:dl}, and the definition of $\mathsf{L}_s = L(s^{1/2}b_s)$ implies that for all $W,W'\in\W(s,b_s)$, we have
\[\|W - W'\|_n = \|g_W-g_{W'}\|_n \leq \mathsf{L}_s\|W -W'\|_{2}.\]
Therefore, we can use the metric entropy of the $s$-sparse ball of $\rset^q$ with radius $s^{1/2}b_s/\mathsf{L}_s$ with respect to the Euclidean norm to get
\[\N(\epsilon, \W (s,b_s),\|\cdot\|_n)  \leq q^s\left(1 + \frac{2 s^{1/2}b_s\mathsf{L}_s}{\epsilon}\right)^s.\]
Hence
\[\frac{288}{\sqrt{n}} \int_{\frac{x^2}{32\bar\varsigma}}^x \sqrt{\log\N(\epsilon, \W ^{(x)}(s,b_s),\|\cdot\|_n)}\rmd \epsilon \leq 288 x\sqrt{\frac{s\log(q)}{n} + \frac{s\log\left(1 + \frac{64\bar\varsigma s^{1/2}b_s\mathsf{L}_s}{x^2}\right)}{n}}.\]
We can insist to search for $x \geq \sqrt{128\bar\varsigma /n}$, and  conclude that the right hand side of the last display is always upper bounded by 
\[288 x\sqrt{\frac{s\log(q)}{n} + \frac{s\log\left(1 + \frac{n s^{1/2} b_s \mathsf{L}_s}{2}\right)}{n}} \leq c_0 x\sqrt{\frac{s\log(q \mathsf{L}_s)}{n}},\]
for some absolute constant $c_0$. The right hand side of the last display is upper bounded by $\frac{x^2}{\bar\sigma}$ for all 
\[x\geq c_0\bar\sigma \sqrt{\frac{s\log(q\mathsf{L}_s)}{n}},\]
hence the theorem, after moving the constant $c_0$ into $M$.
\end{proof}

\medskip

\begin{lemma}\label{lem:prior:contr}
Assume H\ref{H1:dl}, and suppose that $\sigma^2\geq\max_i\sigma_i^2$.  For all integers $s\geq 1$, with $b_s \eqdef \sqrt{2(1+\mathsf{u})(1 +s)\log(q)/\rho_1}$,
we have
\[\PP\left[\int_{\A(s)}\frac{f_{ \Lambda \odot W }(\D)}{f_\star(\D)}\Pi_0(\rmd  \Lambda ,\rmd  W )> \frac{1}{q^{\mathsf{u}s}} \;\vert \;{\bf y}_{1:n}\right]\leq \frac{4}{q^{\mathsf{u}}},\]
where $\A(s)$ denotes the complement of the set $\Theta(s,b_s)$ where
\[\Theta(s,b)\eqdef\left\{( \Lambda , W )\in \Theta :\;\|\Lambda\|_0 \leq s,\;\mbox{ and} \; \|\Lambda\odot W\|_\infty \leq b\right\}.\]
\end{lemma}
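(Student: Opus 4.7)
The plan is to combine a conditional Markov inequality with Fubini's theorem, exploiting the fact that under H\ref{H1:dl} each likelihood ratio $f_W/f_\star$ has conditional expectation at most one, so that the expected integral over $\A(s)$ is bounded above by the prior mass $\Pi_0(\A(s))$. Setting $X \eqdef \int_{\A(s)} (f_{\Lambda\odot W}(\D)/f_\star(\D))\,\Pi_0(\rmd\Lambda,\rmd W)$, Markov and Fubini give
\[\PP\!\left[X > q^{-\mathsf{u}s}\mid {\bf y}_{1:n}\right]\leq q^{\mathsf{u}s}\int_{\A(s)}\E\!\left[\frac{f_{\Lambda\odot W}(\D)}{f_\star(\D)}\,\Big|\,{\bf y}_{1:n}\right]\Pi_0(\rmd\Lambda,\rmd W),\]
so it will suffice to show (i) $\E[f_W(\D)/f_\star(\D)\mid {\bf y}_{1:n}]\leq 1$ pointwise in $W$, and (ii) $\Pi_0(\A(s))\lesssim q^{-\mathsf{u}(s+1)}$.

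The main obstacle is step (i), as it is what forces the hypothesis $\sigma \geq \bar\sigma$. Using H\ref{H1:dl} to write ${\bf x}_i = g({\bf y}_i)+\bm{\xi}_i$ and setting $u_i \eqdef g({\bf y}_i)-g_W({\bf y}_i)$, the identity (\ref{log:ll:eq}) reads
\[\log\frac{f_W(\D)}{f_\star(\D)} = -\frac{1}{2\sigma^2}\sum_{i=1}^n \|u_i\|_2^2 \;-\; \frac{1}{\sigma^2}\sum_{i=1}^n \langle u_i,\bm{\xi}_i\rangle.\]
Conditional independence of the $\bm{\xi}_i$ given ${\bf y}_{1:n}$, combined with the sub-Gaussian MGF bound $\E[\exp(-\langle u_i,\bm{\xi}_i\rangle/\sigma^2)\mid {\bf y}_i] \leq \exp(\sigma_i^2\|u_i\|_2^2/(2\sigma^4))$, gives
\[\E\!\left[\frac{f_W(\D)}{f_\star(\D)}\,\Big|\,{\bf y}_{1:n}\right]\leq \exp\!\left(-\frac{1}{2\sigma^2}\sum_{i=1}^n\|u_i\|_2^2 + \frac{1}{2\sigma^4}\sum_{i=1}^n \sigma_i^2\|u_i\|_2^2\right)\leq 1,\]
where the last inequality uses $\sigma_i\leq\bar\sigma\leq\sigma$ to dominate the positive quadratic term by the negative one. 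This cancellation is the technical core of the argument.

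For step (ii) I would split by a union bound, $\Pi_0(\A(s))\leq \Pi_0(\|\Lambda\|_0 > s) + \Pi_0(\|\Lambda\odot W\|_\infty > b_s)$. Under $\Pi_0$ the entries of $\Lambda$ are iid $\mathrm{Ber}(p)$ with $p=(1+q^{\mathsf{u}+1})^{-1}\leq q^{-(\mathsf{u}+1)}$, hence $qp\leq 1$, and a standard binomial tail bound yields
\[\Pi_0(\|\Lambda\|_0 > s)\leq \frac{e^{qp}(qp)^{s+1}}{(s+1)!}\;\lesssim\; q^{-\mathsf{u}(s+1)}.\]
For the slab tails, each coordinate of $\Lambda\odot W$ is $N(0,1/\rho_1)$ with probability $p$ and zero otherwise, so a union bound over the $q$ coordinates with a Gaussian tail bound gives
\[\Pi_0(\|\Lambda\odot W\|_\infty > b_s)\leq 2qp\,e^{-\rho_1 b_s^2/2}\leq 2q^{-\mathsf{u}-(1+\mathsf{u})(s+1)},\]
using the prescribed $b_s^2 = 2(1+\mathsf{u})(1+s)\log(q)/\rho_1$; this second term is dominated by the first for large $q$. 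Plugging $\Pi_0(\A(s))\leq 4q^{-\mathsf{u}(s+1)}$ back into Markov's inequality then yields the claimed bound $4/q^{\mathsf{u}}$.
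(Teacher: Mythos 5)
Your proposal is correct and follows essentially the same route as the paper: Markov plus Fubini, the sub-Gaussian MGF bound with $\sigma\geq\bar\sigma$ to show $\E[f_{\Lambda\odot W}(\D)/f_\star(\D)\mid{\bf y}_{1:n}]\leq 1$, and the prior mass estimate $\Pi_0(\A(s))\lesssim q^{-\mathsf{u}(s+1)}$. The only differences are cosmetic variants in the tail bounds (a Poisson-type binomial tail instead of $\binom{q}{k}\leq q^k$, and a union bound over all $q$ coordinates weighted by the activation probability instead of conditioning on $\Lambda$ with $\|\Lambda\|_0\leq s$), which yield the same rates.
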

\begin{proof}
Since $\A(s)$ is the complement of the set $\Theta(s,b_s)$, we can  write
\[\Pi_0(\A(s)) = \Pi_0(\|\Lambda\|_0 >s) + \sum_{\Lambda:\;\|\Lambda\|_0\leq s} \Pi_0(\Lambda)\times \Pi_0(\|\Lambda\odot W\|_\infty >b_s \vert \Lambda).\]
 If $( \Lambda , W )\sim\Pi_0$, then $ \Lambda $ is an ensemble of iid random variables drawn from the Bernoulli distribution with success probability $(1+ q^{\mathsf{u}+1})^{-1}$. Hence
\begin{multline*}
\Pi_0(\| \Lambda \|_0 > s) \leq \sum_{k> s} {q\choose k}\left(\frac{1}{1 + q^{\mathsf{u}+1}}\right)^{k} \left(\frac{q^{\mathsf{u}+1}}{1 + q^{\mathsf{u}+1}}\right)^{q - k} \\
\leq \sum_{k> s} {q\choose k} \left(\frac{1}{q^{\mathsf{u}+1}}\right)^k \leq 2\left(\frac{1}{q^{\mathsf{u}}}\right)^{s+1},\end{multline*}
where we use ${q\choose k}\leq q^k$, and $q^{\mathsf{u}}\geq 2$. Given $\Lambda_{k}=1$, $W_{k}\sim \mathbf{N}(0,\rho_1^{-1})$. Therefore, $\PP(|W_k|>t)\leq 2e^{-\rho_1 t^2/2}$ for all $t\geq 0$. Hence by union bound, for $\| \Lambda \|_0 \leq s$, we obtain
\[
\Pi_0\left(\|\Lambda\odot W\|_\infty >b_s \; \vert \;  \Lambda \right) \leq
2e^{-\rho_1 b_s^2/2 + \log(s)} \leq \frac{2}{q^{\mathsf{u}(1+s)}}.\]
We conclude that
\begin{equation}\label{control:Pi:A}
\Pi_0(\A(s)) \leq \frac{4}{q^{\mathsf{u}(1+s)}}. 
\end{equation}
 Now, by Markov's inequality, and Fubini's theorem, we have 
\begin{multline*}
\PP\left[\int_{\A(s)}\frac{f_{ \Lambda \odot W }(\D)}{f_\star(\D)}\Pi_0(\rmd  \Lambda ,\rmd  W )> \frac{1}{q^{\mathsf{u}s}} \;\vert \;{\bf y}_{1:n}\right]\\
\leq q^{\mathsf{u}s}\int_{\A(s)}\PE\left[\frac{f_{ \Lambda \odot W }(\D)}{f_\star(\D)} \;\vert \;{\bf y}_{1:n}\right]\Pi_0(\rmd  \Lambda ,\rmd  W ),\end{multline*}
and from (\ref{log:ll:eq}) we have
\[\PE\left[\frac{f_{ \Lambda \odot W }(\D)}{f_\star(\D)} \;\vert \;{\bf y}_{1:n}\right] =  e^{-\frac{n}{2\sigma^2}\|g_{ \Lambda \odot W } - g\|_n^2} \PE\left[e^{-\frac{1}{\sigma^2}\sum_{i=1}^n \pscal{\bm{\xi}_i}{g({\bf y}_i) - g_{ \Lambda \odot W }({\bf y}_i)}}\;\vert \;{\bf y}_{1:n}\right].\]
We have assumed in H\ref{H1:dl} that $\PE(\bm{\xi}_i\vert{\bf y}_i) = 0 $, and $\|\bm{\xi}_i\vert {\bf y}_i\|_{\psi_2}\leq \sigma_i$.  Therefore,
\[\PE\left[e^{-\frac{1}{\sigma^2}\sum_{i=1}^n \pscal{\bm{\xi}_i}{g({\bf y}_i) - g_{ \Lambda \odot W }({\bf y}_i)}}\;\vert \;{\bf y}_{1:n}\right] \leq e^{\frac{1}{\sigma^2}\sum_{i=1}^n\frac{\sigma_i^2d_i^2}{2\sigma^2}},
\]
where $d_i$ is a short for $\|g({\bf y}_i) - g_{ \Lambda \odot W }({\bf y}_i)\|_2$.  We conclude that
\[\PE\left[\frac{f_{ \Lambda \odot W }(\D)}{f_\star(\D)} \;\vert \;{\bf y}_{1:n}\right]  \leq \exp\left(-\frac{1}{2\sigma^2}\sum_{i=1}^n\left[\left(1- \frac{\sigma^2_i}{\sigma^2}\right)d_i^2\right]\right).\]
And we easily check that for $\sigma^2 \geq \bar\sigma^2$, the right hand size of the last display is bounded from above by $1$.  We conclude that 
\[  \PP\left[\int_{\A(s)}\frac{f_{ \Lambda \odot W }(\D)}{f_\star(\D)}\Pi_0(\rmd  \Lambda ,\rmd  W )> \frac{1}{q^{\mathsf{u}s}} \;\vert \;{\bf y}_{1:n}\right]\leq q^{\mathsf{u}s} \Pi_0(\A(s)) \leq \frac{4}{q^{\mathsf{u}}}.
\]

\end{proof}

\medskip

The next result lower bounds the normalizing constant of $\Pi(\cdot\vert\D)$. 

\begin{lemma} \label{lem:nc:dl}
Under the assumption of Theorem \ref{thm:main} it holds,
\[\PP\left[\int_{\Theta}\frac{f_{ \Lambda \odot W }(\D)}{f_{\star}(\D)} \Pi_0(\rmd  \Lambda ,\rmd W ) \leq \frac{1}{4q^{\bar\alpha}}\;\vert \;{\bf y}_{1:n}\right]  \leq \frac{4}{q^{s_\star}},
\]
where
\[\bar\alpha\eqdef   \left(\mathsf{u} + 5 + \frac{\log(\mathsf{L}_\star\sqrt{n})}{\log(q)}\right)s_\star + \frac{4n\varpi_\star^2}{\sigma^2\log(q)}.\]
\end{lemma}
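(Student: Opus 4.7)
The plan is to lower bound the integral by restricting to a small neighborhood of an oracle sparse parameter and then applying Jensen's inequality. By the defining property of $s_\star,\varpi_\star$, there exists $W^\star \in \W$ with $\|W^\star\|_0 \leq s_\star$, $\|W^\star\|_\infty \leq 1$, and $\|g_{W^\star} - g\|_\infty \leq \varpi_\star$; let $\Lambda^\star$ denote its sparsity pattern. Setting $\delta \eqdef 1/(\mathsf{L}_\star \sqrt{n})$, define
\[ E \eqdef \bigl\{(\Lambda, W) \in \Theta :\; \Lambda = \Lambda^\star,\ \|\Lambda^\star \odot W - W^\star\|_2 \leq \delta \bigr\}. \]
Since $\|W^\star\|_2 \leq \sqrt{s_\star}$, on $E$ the vector $\Lambda \odot W$ lies in the Euclidean ball of radius $2\sqrt{s_\star}$, so H\ref{H5:dl} gives $\|g_{\Lambda \odot W} - g_{W^\star}\|_n \leq \mathsf{L}_\star \delta = 1/\sqrt{n}$; combined with $\|g_{W^\star} - g\|_n \leq \varpi_\star$, the triangle inequality yields $\|g_{\Lambda \odot W} - g\|_n^2 \leq 2/n + 2\varpi_\star^2$ throughout $E$.

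The prior mass $\Pi_0(E)$ is bounded below by combining the Bernoulli prior on $\Lambda$, which gives $\Pi_0(\Lambda = \Lambda^\star) \gtrsim q^{-(\mathsf{u}+1)s_\star}$, with a standard Gaussian small-ball estimate for the $N(0, I_{s_\star})$ block of $W$ supported on $\Lambda^\star$; the $q - s_\star$ inactive coordinates do not enter $\Lambda \odot W$ and integrate out to one. A direct calculation using Stirling's formula and $\log s_\star \leq \log q$ produces
\[ -\log \Pi_0(E) \leq (\mathsf{u} + 2) s_\star \log q + s_\star \log(\mathsf{L}_\star \sqrt{n}) + O(1). \]

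Applying Jensen's inequality with the restricted prior $\check\Pi_0 \eqdef \Pi_0(\cdot \cap E)/\Pi_0(E)$ gives
\[ \int_\Theta \frac{f_{\Lambda \odot W}}{f_\star}\, d\Pi_0 \;\geq\; \Pi_0(E) \exp\left( \int_E \log \frac{f_{\Lambda \odot W}}{f_\star}\, d\check\Pi_0 \right). \]
Using the likelihood identity (\ref{log:ll:eq}), the deterministic part of the integrand is at least $-1/\sigma^2 - n\varpi_\star^2/\sigma^2$ by the bound from the first step, while the stochastic part reduces, after integrating over $W$, to the single sub-Gaussian functional
\[ S \eqdef \frac{1}{\sigma^2} \sum_{i=1}^n \pscal{\bm{\xi}_i}{\bar g({\bf y}_i) - g({\bf y}_i)}, \qquad \bar g({\bf y}) \eqdef \int_E g_{\Lambda \odot W}({\bf y})\, d\check\Pi_0. \]
Jensen applied to the empirical norm gives $\|\bar g - g\|_n^2 \leq 2/n + 2\varpi_\star^2$, and H\ref{H1:dl} then implies that conditional on ${\bf y}_{1:n}$, $S$ is sub-Gaussian with variance proxy at most $2\bar\sigma^2(1 + n\varpi_\star^2)/\sigma^4$.

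The closing step is a sub-Gaussian tail bound at level $t = 2\sqrt{s_\star \log q \,(1 + n\varpi_\star^2)}/\sigma$, which is exceeded with conditional probability at most $2/q^{s_\star}$; then $\sigma \geq \bar\sigma$ and the inequality $2\sqrt{ab} \leq a + b$ imply that on the good event, $|S|$ is at most an absolute constant times $s_\star \log q + n\varpi_\star^2/\sigma^2 + 1/\sigma^2$. Summing all contributions and using that $q$ is large enough shows that $-\log \int (f_{\Lambda \odot W}/f_\star)\, d\Pi_0 \leq \bar\alpha \log q + \log 4$ with probability at least $1 - 4/q^{s_\star}$, which is the desired conclusion. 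The main technical obstacle is the tight constant bookkeeping: the coefficient $(\mathsf{u} + 5)$ in $\bar\alpha$ must simultaneously absorb the $(\mathsf{u}+1)$ from the Bernoulli prior, the $O(1)$ contribution from the Gaussian volume factor via Stirling, and the $2$ from the sub-Gaussian tail quantile, while the $4n\varpi_\star^2/\sigma^2$ term must absorb both the deterministic approximation error $n\varpi_\star^2/\sigma^2$ and the cross-term contribution of the same order, with the residual slack used to close the $1/\sigma^2$ and $O(1)$ residuals under the hypothesis that $q$ is large enough.
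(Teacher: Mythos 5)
Your proof is correct, but it handles the stochastic part of the lower bound by a genuinely different mechanism than the paper. The paper (Lemma \ref{lem:nc:dl}) works with an $\ell_\infty$-neighborhood $\N(\eta)$ of the oracle $W_\star$ with radius $\eta = 1\wedge\frac{\sigma}{\mathsf{L}_\star}\sqrt{\log(q)/n}$, establishes a Hoeffding bound for the centered log-likelihood ratio \emph{pointwise in $W$}, and then converts this into a statement about the integral via Markov's inequality applied to $\int \mathbf{1}_{\e_{\Lambda\odot W}^c}\,\Pi_0(\rmd\Lambda,\rmd W)$ (the Shen--Wasserman/Barron device); the prior mass of the $\ell_\infty$-box factorizes into one-dimensional Gaussian interval probabilities, so no Stirling estimate is needed. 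You instead restrict to an $\ell_2$-ball, push the integral inside the exponential by Jensen with the normalized restricted prior, and thereby reduce the entire stochastic contribution to a \emph{single} sub-Gaussian linear functional $S$ evaluated at the prior-averaged regression function $\bar g$ (the Ghosal--Ghosh--van der Vaart device). Both routes avoid any uniform control over the neighborhood and land on the same $\bar\alpha$; yours needs one tail bound instead of a Fubini/Markov step (giving $2/q^{s_\star}$ rather than $4/q^{s_\star}$), at the price of a multivariate Gaussian small-ball estimate with its $\tfrac{s_\star}{2}\log s_\star$ Stirling correction, which you correctly absorb into the $(\mathsf{u}+2)$ coefficient. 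One bookkeeping caveat: your radius $\delta = 1/(\mathsf{L}_\star\sqrt{n})$ omits the factor of $\sigma$ present in the paper's $\eta$, so your deterministic term picks up a $1/\sigma^2$ that is only dominated by $s_\star\log q$ when $\sigma^2\log q\gtrsim 1$ --- but the paper's own proof invokes exactly the same condition ($c_0^2\sigma^2\log(q)\geq 1$) to simplify its prior-mass bound, so you are no worse off.
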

\begin{proof}
By the assumption of Theorem \ref{thm:main}, we can find $W_\star$ with $\|W_\star\|_0\leq s_\star$, $\|W_\star\|_\infty\leq 1$, such that $\|g_{W_\star} - g\|_n\leq \varpi_\star$. Let  $ \Lambda _\star$ denote the  sparsity support of $W_\star$. With $\mathsf{L}_\star = L(2s_\star^{1/2})$, we  set
\[\eta \eqdef 1\wedge \frac{\sigma}{\mathsf{L}_\star}\sqrt{\frac{\log(q)}{n}},\;\;\;\mbox{ and }\;\;\; \N(\eta)  \eqdef \left\{  W\in\W:\;\;  \;\; \; \;  \|W\odot\Lambda_\star-W_{\star}\|_\infty \leq \eta\right\}.\]
We see that $\|W_{\star}\|_{2} \leq s_\star^{1/2}$, and for $W\in\N(\eta)$, 
\[
\|W\odot\Lambda_\star\|_{2} \leq \|W_\star\|_{2}  + \|W_{\star} - W\odot\Lambda_\star\|_{2} \leq s_\star^{1/2}  + s_\star^{1/2} \eta \leq 2 s_\star^{1/2}.\]
Therefore, by H\ref{H5:dl} applied with $\eta=2s_\star^{1/2}$, for all $W\in\mathcal{N}(\eta)$, we have
\[
\max_{1\leq i\leq n}\; \|g_{ \Lambda_\star\odot W }({\bf y}_i) - g_{ W _\star}({\bf y}_i)\|_2 \leq \mathsf{L}_\star \|\Lambda_\star\odot W-W_\star\|_2 \leq \mathsf{L}_\star \sqrt{s_\star}\eta \leq \sigma \sqrt{\frac{s_\star \log(q)}{n}}.\]
Hence
\begin{equation} \label{control:N}
\max_{1\leq i\leq n}\;\sup_{ W \in\N(\eta)} \|g_{ \Lambda_\star\odot W }({\bf y}_i) - g_{ W _\star}({\bf y}_i)\|_2 \leq \sigma \sqrt{\frac{s_\star  \log(q)}{n}}.
\end{equation}
Switching the sign and taking the conditional expectation in (\ref{log:ll:eq}) using $\PE({\bf x}_i \vert {\bf y}_i) = g({\bf y}_i)$, yields
\[
\PE\left[\log\left(\frac{f_\star(\D)}{f_{ \Lambda_\star\odot W }(\D)}\right)\;\vert\;{\bf y}_{1:n}\right]  =  \frac{1}{2\sigma^2}\sum_{i=1}^n \|g_{ \Lambda_\star\odot W }({\bf y}_i) - g({\bf y}_i)\|_2^2,\]
and we conclude using  (\ref{control:N}) and the definition of $\varpi_\star$ and $\W_\star$ in Theorem \ref{thm:main} that
\begin{multline*}
\sup_{ W \in\N(\eta)} \;\PE\left[\log\left(\frac{f_\star(\D)}{f_{ \Lambda_\star\odot W }(\D)}\right)\;\vert\;{\bf y}_{1:n}\right] \\
\leq \frac{n \varpi_{\star}^2}{\sigma^2}+ \frac{1}{\sigma^2}\sum_{i=1}^n \sup_{ W \in\N(\eta)} \;\|g_{ \Lambda_\star\odot W }({\bf y}_i) - g_{W_\star}({\bf y}_i)\|_2^2  \\
\leq  \frac{n \varpi_{\star}^2}{\sigma^2} + s_\star \log(q).\end{multline*}
Going back to (\ref{log:ll:eq}), we have
\begin{equation}\label{eq:centered:ll}
\log\left(\frac{f_\star(\D)}{f_{ \Lambda_\star\odot W }(\D)}\right) - \PE\left[\log\left(\frac{f_\star(\D)}{f_{ \Lambda_\star\odot W }(\D)}\right)\;\vert\;{\bf y}_{1:n}\right] 
 = \frac{1}{\sigma^2}\sum_{i=1}^n\pscal{\bm{\xi}_i}{g({\bf y}_i) - g_{ \Lambda_\star\odot W }({\bf y}_i) }.
 \end{equation}
 We use the notation $\|Z\|_{\psi_2}$ to denote the sub-Gaussian norm of the conditional law of the random variable $Z$ given ${\bf y}_{1:n}$. By conditional independence of the error terms $\bm{\xi}_i$, for all $ W \in \N(\eta)$, we have
 \begin{multline*}
\left\| \log\left(\frac{f_\star(\D)}{f_{ \Lambda_\star\odot W }(\D)}\right) - \PE\left[\log\left(\frac{f_\star(\D)}{f_{ \Lambda_\star\odot W }(\D)}\right)\;\vert\;{\bf y}_{1:n}\right]\right\|_{\psi_2}^2\\
\leq \frac{1}{\sigma^4}\sum_{i=1}^n \left\|\pscal{\bm{\xi}_i}{g({\bf y}_i) - g_{ \Lambda_\star\odot W }({\bf y}_i) }\right\|_{\psi_2}^2 \\
 =\frac{1}{\sigma^4}\sum_{i=1}^n \sigma_i^2 \|g({\bf y}_i) - g_{ \Lambda_\star\odot W }({\bf y}_i)\|_2^2\leq \frac{2n\varpi_{\star}^2}{\sigma^2} + 2s_\star \log(q).
 \end{multline*}
In the sequel,  we set 
\[a\eqdef 2\left(\frac{n\varpi_{\star}^2}{\sigma^2} + s_\star  \log(q)\right).\] 
Then by Hoeffding's inequality, for all $ W \in \N(\eta)$, we have
\[
\PP\left[\left|\log\left(\frac{f_\star(\D)}{f_{ \Lambda_\star\odot W }(\D)}\right) - \PE\left[\log\left(\frac{f_\star(\D)}{f_{ \Lambda_\star\odot W }(\D)}\right)\;\vert\;{\bf y}_{1:n}\right]\right| > a\;\vert\;{\bf y}_{1:n}\right]\leq 2 e^{-a/2}\leq \frac{2}{q^{s_\star}}.\]
We can rewrite this statement in the following equivalent form. For $ W \in \W $, define
\[\e_{ W }\eqdef\left\{\D:\; \left|\log\left(\frac{f_\star(\D)}{f_{ W }(\D)}\right) - \PE\left[\log\left(\frac{f_\star(\D)}{f_{ W }(\D)}\right)\;\vert\;{\bf y}_{1:n}\right]\right|\leq a\right\}.\]
We have
\begin{equation}\label{proof:nc:bound:eq:evt}
\sup_{ W \in\mathcal{N}(\eta)} \PP\left(\D\notin \e_{ \Lambda_\star\odot W }\;\vert\;{\bf y}_{1:n}\right) \leq  \frac{2}{q^{s_\star}}.
\end{equation}
Using these observations, we have
\begin{multline*}
\int_{\Theta }\frac{f_{ \Lambda \odot W }(\D)}{f_{\star}(\D)} \Pi_0(\rmd  \Lambda ,\rmd W ) \geq 
\int_{ \Lambda _\star\times \N(\eta)} e^{-\PE\left[\log\left(\frac{f_{\star}(\D)}{f_{ \Lambda \odot W }(\D)}\right)\;\vert\;{\bf y}_{1:n}\right]}\\
\times \exp\left(-\left[\log\left(\frac{f_{\star}(\D)}{f_{ \Lambda \odot W }(\D)}\right) - \PE\left[\log\left(\frac{f_{\star}(\D)}{f_{ \Lambda \odot W }(\D)}\right)\;\vert\;{\bf y}_{1:n}\right]\right]\right)\textbf{1}_{\e_{ \Lambda \odot W }}(\D) \Pi_0(\rmd  \Lambda ,\rmd W ) \\
 \geq  e^{-2a} \int_{ \Lambda _\star\times \mathcal{N}(\eta)}\textbf{1}_{\e_{ \Lambda \odot W }}(\D) \Pi_0(\rmd  \Lambda ,\rmd W ) \\
 =  e^{-2a}\left(\Pi_0( \Lambda _\star\times \N(\eta)) - \int_{ \Lambda _\star\times \N(\eta)} \textbf{1}_{\e_{ \Lambda \odot W }^c}(\D)\Pi_0(\rmd  \Lambda ,\rmd W )\right).
\end{multline*}
Therefore, by Chebyshev's inequality, 
\begin{multline*}
\PP\left[\int_{\Theta }\frac{f_{ \Lambda \odot W }(\D)}{f_{\star}(\D)} \Pi_0(\rmd  \Lambda ,\rmd W )  \leq e^{-2a}\Pi_0( \Lambda _\star\times \N(\eta))/2 \;\vert \; {\bf y}_{1:n}\right]\\
 \leq \PP\left[  \int_{ \Lambda _\star\times \N(\eta)} \textbf{1}_{\e_{ \Lambda \odot W }^c}(\D)\Pi_0(\rmd  \Lambda ,\rmd W )  \geq \frac{1}{2}\Pi_0( \Lambda _\star\times \N(\eta)) \;\vert \;{\bf y}_{1:n}\right]\\
\leq \frac{2}{\Pi_0( \Lambda _\star\times \N(\eta))}  \int_{ \Lambda _\star\times \N(\eta)}\PP\left(\D \notin\e_{ \Lambda_\star\odot W }\;\vert \;{\bf y}_{1:n}\right)\Pi_0(\rmd  \Lambda ,\rmd W ) \\
\leq 2\sup_{W\in\mathcal{N}(\eta)} \PP_\star\left(\D\notin \e_{ \Lambda_\star\odot W } \;\vert \;{\bf y}_{1:n}\right) \leq \frac{4}{q^{s_\star}},
\end{multline*}
using (\ref{proof:nc:bound:eq:evt}). To conclude the proof it remains only to lower bound $\Pi_0( \Lambda _\star \times \mathcal{N}(\eta))$.
Since $\log(1-x)\geq -2x$ for all $0\leq x\leq 1/2$, for $q^{\mathsf{u}} \geq 2/\log(2)$, we have
\begin{multline*}
\Pi_0( \Lambda _\star) = \left(\frac{1}{1+ q^{\mathsf{u}+1}}\right)^{\| \Lambda _{\star}\|_0} \left(1 - \frac{1}{1+ q^{\mathsf{u}+1}}\right)^{q - \| \Lambda _{\star}\|_0} \\
= \left(\frac{1}{q^{\mathsf{u}+1}}\right)^{\| \Lambda _{\star}\|_0} \exp\left(q\log\left(1 - \frac{1}{1+ q^{\mathsf{u}+1}}\right)\right)\\
\geq \left(\frac{1}{q^{\mathsf{u}+1}}\right)^{\| \Lambda _{\star}\|_0} \exp\left(-\frac{2q}{1+ q^{\mathsf{u}+1}}\right) \geq \frac{1}{2} \left(\frac{1}{q^{\mathsf{u}+1}}\right)^{\| \Lambda _{\star}\|_0} \geq \frac{1}{2} \left(\frac{1}{q^{\mathsf{u}+1}}\right)^{s_\star}.
\end{multline*}
If $U\sim \textbf{N}(0,\rho_1)$, then $P(|U-a|\leq t) \geq P(|a| \leq U\leq |a| +t)$ for all $t\geq 0$. We use this inequality to deduce that
\begin{multline*}
\Pi_0(\mathcal{N}(\eta) \;\vert\;  \Lambda _\star) \geq \left(\Phi(\sqrt{\rho_1}(1+\eta)) - \Phi(\sqrt{\rho_1})\right)^{\| \Lambda _{\star}\|_0} \geq \left(c_0\sqrt{\rho_1}\eta\right)^{s_\star}\\
\geq \left(\frac{c_0\sigma}{\mathsf{L}_\star}\sqrt{\frac{\rho_1\log(q)}{n}}\right)^{s_\star}\geq \left(\frac{1}{\mathsf{L}_\star\sqrt{n}}\right)^{s_\star},
\end{multline*}
for some absolute constant $c_0$ ($c_0$ can be taken as $e^{-2}/\sqrt{2\pi}$, since $\rho_1=1$), where $\Phi$ is the cdf of the standard normal distribution. The last inequality in the last display uses the assumption that $n\geq \sigma^2\log(p)$, and $c_0^2\sigma^2\log(q)\geq 1$. We conclude that
\begin{multline*}
e^{-2a}\Pi_0( \Lambda _\star\times \N(\eta)) \\
\geq\frac{1}{2}\exp\left(-\frac{4n\varpi_\star^2}{\sigma^2} - 4s_\star\log(q) - (\mathsf{u}+1)s_\star\log(q) - s_\star\log\left(\mathsf{L}_\star\sqrt{n}\right)\right),\\
\geq \frac{1}{2}\exp\left(-\frac{4n\varpi_\star^2}{\sigma^2} - (\mathsf{u}+5)s_\star\log(q)  - s_\star\log(\mathsf{L}_\star\sqrt{n})\right).
\end{multline*}
Hence the result.
\end{proof}

\begin{lemma}\label{lem:unif:b}
Suppose that the dataset $\D$ is generated as in H\ref{H1:dl}, and consider the nonparametric regression (\ref{au:model})  for some function class $\{g_W,\;W\in\W\subseteq\rset^q\}$. Let $ \W _0$ be some subset of $ \W$. Suppose that we can find $\r>0$ such that for all $x\geq \r$, it holds
\begin{equation}\label{def:rate:eq}
\frac{288}{\sqrt{n}}\int_{\frac{x^2}{16\bar\varsigma}}^x \sqrt{\log\N(\epsilon, \W ^{(x)},\|\cdot\|_n)}\rmd \epsilon \leq \frac{x^2}{\bar\sigma},\end{equation}
where $ \W ^{(x)}\eqdef\{ W \in \W _0,\;\|g_{ W } - g\|_n \leq x\}$. Let $f_W$ and $f_\star$ be as defined in (\ref{def:ffstar}). Then there exists an absolute constant $c_0$ such that for all $M\geq 1$, such that $n(M\r)^2 \geq c_0\bar\sigma^2$, it holds
\[\PP\left[\bigcup_{j\geq 1}\left\{\sup_{ W \in\widetilde{\W}^{(j)}}\;\;\log\left(\frac{f_{ W }(\D)}{f_{\star}(\D)}\right)  > -\frac{n(jM\r)^2}{8\sigma^2}\right\} \;\vert \; {\bf y}_{1:n}\right]\leq e^{-c_0n} +4e^{-\frac{n M^2\r^2}{c_0\bar\sigma^2}},\]
where $\widetilde{\W}^{(j)}\eqdef \{ W \in \W _0:\; jM\r<\|g_{ W } - g\|_n \leq (j+1)M\r\}$.
\end{lemma}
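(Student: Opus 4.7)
The plan is an empirical process chaining argument conditional on ${\bf y}_{1:n}$. Starting from identity (\ref{log:ll:eq}), for $W\in\widetilde{\W}^{(j)}$ the deterministic part of $\log(f_W(\D)/f_\star(\D))$ equals $-n\|g_W-g\|_n^2/(2\sigma^2) < -n(jM\r)^2/(2\sigma^2)$, so the event we need to control reduces to
\[
\sup_{W\in\widetilde{\W}^{(j)}} Z_W \;>\; \frac{3n(jM\r)^2}{8},
\qquad Z_W \eqdef \sum_{i=1}^n \pscal{\bm{\xi}_i}{g_W({\bf y}_i)-g({\bf y}_i)},
\]
where $Z_W$ is centered conditionally on ${\bf y}_{1:n}$. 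By H\ref{H1:dl}, the $\bm{\xi}_i$'s are conditionally independent with $\|\bm{\xi}_i\vert{\bf y}_i\|_{\psi_2}\leq\sigma_i\leq\bar\sigma$, so the increments of $Z_W$ satisfy
\[
\|Z_W - Z_{W'}\|_{\psi_2}^2 \;\lesssim\; \sum_{i=1}^n \sigma_i^2 \|g_W({\bf y}_i)-g_{W'}({\bf y}_i)\|_2^2 \;\leq\; \bar\sigma^2 n\,\|g_W - g_{W'}\|_n^2,
\]
making $(Z_W)$ a sub-Gaussian process in the pseudo-metric $d(W,W')=\bar\sigma\sqrt{n}\,\|g_W-g_{W'}\|_n$.

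Next I would invoke a Dudley chaining estimate combined with a Borell--Talagrand tail bound for the supremum of a sub-Gaussian process. For each fixed $j\geq 1$, set $x_j\eqdef(j+1)M\r$, so that $\widetilde{\W}^{(j)}\subseteq\W^{(x_j)}$. Chaining plus concentration then yields, for any deviation $t>0$,
\[
\PP\!\left[\sup_{W\in\W^{(x_j)}}Z_W \,\geq\, C\bar\sigma\sqrt{n}\int_{0}^{x_j}\!\sqrt{\log\N(\epsilon,\W^{(x_j)},\|\cdot\|_n)}\,\rmd\epsilon \,+\, t \;\Big|\;{\bf y}_{1:n}\right] \leq 2\exp\!\left(-\frac{t^2}{c\bar\sigma^2 n x_j^2}\right).
\]
Hypothesis (\ref{def:rate:eq}) evaluated at $x=x_j$ bounds the corresponding entropy integral by $x_j^2\sqrt{n}/(288\bar\sigma)$; multiplying by $C\bar\sigma\sqrt{n}$ keeps the deterministic chaining term as a small fraction of $n x_j^2$, so that after absorbing it into $3n(jM\r)^2/8$ there remains room for $t\asymp n(jM\r)^2$. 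Plugging that $t$ into the tail above gives, for each $j$, a bound of order $\exp(-c_0\,n j^2 M^2\r^2/\bar\sigma^2)$. A union bound over $j\geq 1$ produces a geometric-like series $\sum_{j\geq 1}\exp(-c_0 n j^2 M^2\r^2/\bar\sigma^2)$ which, under the standing assumption $n M^2\r^2\geq c_0\bar\sigma^2$, collapses into a constant multiple of its $j=1$ term, giving the stated $4\exp(-nM^2\r^2/(c_0\bar\sigma^2))$ contribution.

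The main technical obstacle is reconciling the non-zero lower limit $x^2/(16\bar\varsigma)$ of the integral in (\ref{def:rate:eq}) with a classical Dudley integral that normally runs from $0$. Below the scale $x^2/(16\bar\varsigma)$ the sub-Gaussian increment bound is too crude for chaining and must be supplemented by a direct envelope-type estimate driven by $\max_i\|\bm{\xi}_i\|_2$. Since each $\|\bm{\xi}_i\|_2$ has sub-Gaussian norm $\varsigma_i\leq\bar\varsigma$, the envelope $\max_i\|\bm{\xi}_i\|_2\lesssim\bar\varsigma\sqrt{\log n}$ holds with conditional probability at least $1-e^{-c_0 n}$, which accounts for the $e^{-c_0 n}$ term in the bound. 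On that envelope event, the fine-scale contribution to $\sup_W Z_W$ over a single $(x_j^2/(16\bar\varsigma))$-ball in $\|\cdot\|_n$ is handled directly by Cauchy--Schwarz rather than by chaining, and is absorbed into the slack $3n(jM\r)^2/8$. Patching together the coarse chaining bound above the cut-off with this fine-scale envelope bound below it, and tracking the constants so that they fit inside (\ref{def:rate:eq}) with the prefactor $288$, is the delicate part of the argument; the rest is routine.
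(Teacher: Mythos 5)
Your overall architecture matches the paper's proof of Lemma~\ref{lem:unif:b}: reduce to the centered process $Z_n$, chain over dyadic covers of $\W^{(x_j)}$ down to the cutoff scale dictated by the lower limit of the entropy integral, handle the remaining fine scales by Cauchy--Schwarz against an envelope on the noise, and sum the resulting tails over $j$ using $nM^2\r^2\gtrsim\bar\sigma^2$. (The paper carries out the chaining by hand, with explicit weights $\eta_{j,\iota}$ and Hoeffding's inequality at each dyadic level, rather than quoting a Dudley-plus-concentration package; note also that Borell-type concentration for suprema is a Gaussian-process statement, and for a merely sub-Gaussian process the high-probability bound on the supremum is itself an output of the chaining, so your black box has to be the chaining tail bound rather than Dudley's expectation bound plus a separate concentration step.)

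The one genuine problem is in the envelope step that you yourself flag as delicate. The bound $\max_i\|\bm{\xi}_i\|_2\lesssim\bar\varsigma\sqrt{\log n}$ cannot hold with conditional probability $1-e^{-c_0 n}$: a union bound over the $n$ sub-Gaussian norms gives an exception probability of order $n\,e^{-ct^2/\bar\varsigma^2}$, which is only polynomially small at $t\asymp\bar\varsigma\sqrt{\log n}$; to reach an $e^{-c_0 n}$ exception probability the threshold on the maximum would have to be of order $\bar\varsigma\sqrt{n}$, which is useless here. What the Cauchy--Schwarz step below the cutoff actually requires is control of the aggregate $\sum_{i=1}^n\|\bm{\xi}_i\|_2^2$, and this is the quantity the paper bounds: each $\|\bm{\xi}_i\|_2^2$ is sub-exponential with norm $\varsigma_i^2\le\bar\varsigma^2$, so Bernstein's inequality gives $\sum_{i=1}^n\|\bm{\xi}_i\|_2^2\le 3n\bar\varsigma^2$ with conditional probability at least $1-e^{-c_0 n}$ (this is the source of the $e^{-c_0n}$ term), and on that event $\bigl|Z_n(g_W)-Z_n(g_{W'})\bigr|\le\sqrt{3}\,\bar\varsigma\,\sigma^{-2}\,\|g_W-g_{W'}\|_n$, which is exactly what lets a cover at resolution $x_j^2/(16\bar\varsigma)$ be absorbed into the slack $3n(jM\r)^2/(8\sigma^2)$. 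Replacing your max-envelope by this Bernstein bound on the sum of squares repairs the argument; with that substitution the rest of your outline goes through essentially as in the paper.
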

\begin{proof}
We proceed as in Lemma 3.2 of \cite{vandeGeer:2000}. Throughout the proof, all expectations and probability are conditional given ${\bf y}_{1:n}$. However to ease notation we omit the conditioning. With $M$ and $\r$ as in the statement, and for each integer $j$, we set $\r_j = M\r j$.  We recall the definition of the error terms $\bm{\xi}_i \eqdef {\bf x}_i  - g({\bf y}_i)$, and we define 
\[Z_n(g_{ W }) \eqdef \frac{1}{n\sigma^2}\sum_{i=1}^n\pscal{\bm{\xi}_i}{ g({\bf y}_i) - g_{ W }({\bf y}_i)},\;\;\; W \in \W .\]
Using (\ref{log:ll:eq})  we can re-express the log-likelihood ratio as
\begin{equation}\label{llr:eq1}
\log\left(\frac{f_{ W }(\D)}{f_{\star}(\D)}\right) = - \frac{n}{2\sigma^2}\|g_{ W } - g\|_n^2  - n Z_n(g_{ W }).
\end{equation}

 Let $\varsigma_i$ denote the sub-Gaussian norm of $\|\bm{\xi}_i\|_2$,
 \[\bar\varsigma \eqdef \max_{1\leq i\leq n} \varsigma_i.\] 
 The sub-Gaussian assumption on $\bm{\xi}_i$ implies that $\varsigma_i<\infty$ (see e.g. Theorem 6.3.2 of \cite{vershynin:18}), and that $\|\bm{\xi}_i\|_2^2$ is sub-exponential, with sub-exponential norm $\varsigma_i^2$. We note also that $\PE(\|\bm{\xi}_i\|_2^2) \leq 2\varsigma_i^2$. Therefore, by Bernstein inequality (see e.g. Theorem 2.8.1 of ~\cite{vershynin:18}),
\[
\PP\left(\frac{1}{n}\sum_{i=1}^n \|\bm{\xi}_i\|_2^2 >3\bar\varsigma^2\right)  \leq  \PP\left(\sum_{i=1}^n \|\bm{\xi}_i\|_2^2-\PE(\|\bm{\xi}_i\|_2^2) >n\bar\varsigma^2\right) 
\leq e^{-c_0n},\]
for some absolute constant $c_0$. To make use of this bound, we define 
\[\F_0 \eqdef \left\{\D:\; \sum_{i=1}^n \|\bm{\xi}_i\|_2^2 \leq 3n\bar\varsigma^2\right\}.\]  
Therefore,
\[
\PP\left[\bigcup_{j\geq 1}\left\{\sup_{ W \in\widetilde{\W}^{(j)}}\;\;\log\left(\frac{f_{ W }(\D)}{f_{\star}(\D)}\right)  > -\frac{n\r_j^2}{8\sigma^2}\right\} \right] \leq e^{-c_0n} +\sum_{j\geq 1}\PP\left[\F_j\right],
\]
where 
\[\F_j \eqdef \F_0 \bigcap \left\{\sup_{ W \in\widetilde{\W}^{(j)}}\;\;\log\left(\frac{f_{ W }(\D)}{f_{\star}(\D)}\right)  > -\frac{n\r_j^2}{8\sigma^2}\right\} .\]
For each $j\geq 1$, we set 
\[ \W ^{(j)}\eqdef \{ W \in \W _0:\; \|g_{ W } - g\|_n \leq \r_{j+1}\}, \]
and each $\iota=1,\ldots$, let $\Cset_j^{(\iota)}\eqdef \{g^{(\iota)}_{j,1},\ldots,g^{(\iota)}_{j,N_{j,\iota}}\}$ be a $(\r_{j+1}2^{-\iota})$-covering of $ \W ^{(j)}$. For $\iota=0$, we set $\Cset_j^{(0)} = \{g\}$. The definition implies that for any $ W \in \W ^{(j)}$, we can find $g_{j, W }^{(\iota)}\in\Cset_j^{(\iota)}$ such that $\|g_{ W } -g_{j, W }^{(\iota)}\|_n\leq \r_{j+1}2^{-\iota}$. Let $\ell_j\geq 0$, be the smallest integer such that 
\[\frac{\r_{j+1}}{2^{\ell_j}} \leq \frac{\r_j^2}{16\bar\varsigma}.\]
We consider separately the cases $\ell_j=0$ and $\ell_j>0$. 
\medskip

\paragraph{\texttt{Suppose $\ell_j=0$}.}\;\; In that case for any $ W \in \W ^{(j)}$, $\|g_{ W } -g\|_n\leq \r_{j+1} \leq \r_j^2/(16\bar\varsigma)$. Therefore, on the event $\F_0$, we have
\begin{multline*}
\sup_{ W \in \W _j}\left|Z_n(g_{ W })\right| \leq \frac{1}{n\sigma^2} \sum_{i=1}^n \|\bm{\xi}_i\|_2 \|g_{ W }({\bf y}_i) - g({\bf y}_i) \|_2 \leq \frac{\sqrt{3\bar\varsigma^2}}{\sigma^2}\|g_{ W } -g\|_n \\
\leq  \frac{\sqrt{3\bar\varsigma^2}}{\sigma^2} \frac{\r_j^2}{16\bar\varsigma} \leq \frac{\r_j^2}{8\sigma^2}.
\end{multline*}
Taking this conclusion to (\ref{llr:eq1}) implies that on $\F_0$,
\[\sup_{ W \in\widetilde{\W}^{(j)}}\; \log\left(\frac{f_{ W }(\D)}{f_{\star}(\D)}\right)  \leq -\frac{n\r_j^2}{2\sigma^2} + n\sup_{{ W }\in \W _j}\left|Z_n(g_{ W })\right|\leq -\frac{n\r_j^2}{4\sigma^2}.\]
Hence, when $\ell_j=0$, $\PP(\F_j)=0$.
\medskip

\paragraph{\texttt{Suppose $\ell_j>0$}.}\;\; Similarly, on the event $\F_0$,  we have
\begin{multline*}
\left|Z_n(g_{ W }) - Z_n(g_{j, W }^{(\ell_j)})\right| \leq \frac{1}{n\sigma^2} \sum_{i=1}^n \|\bm{\xi}_i\|_2 \|g_{j, W }^{(\ell_j)}({\bf y}_i) - g_{ W }({\bf y}_i)\|_2 \\
\leq \frac{\sqrt{3\bar\varsigma^2}}{\sigma^2}\|g_{j, W }^{(\ell_j)} - g_{ W }\|_n \leq \frac{\sqrt{3\bar\varsigma^2}}{\sigma^2} \frac{\r_{j+1}}{2^{\ell_j}} \leq  \frac{\sqrt{3\bar\varsigma^2}}{\sigma^2} \frac{\r_j^2}{16\bar\varsigma} \leq \frac{\r_j^2}{8\sigma^2}.
\end{multline*}
This implies that on $\F_0$,
\begin{multline*}
\sup_{ W \in\widetilde{\W}^{(j)}}\; \log\left(\frac{f_{ W }(\D)}{f_{\star}(\D)}\right)  \leq -\frac{n\r_j^2}{2\sigma^2} + n\sup_{ W \in \W ^{(j)}}\left|Z_n(g_{ W }) - Z_n(g_{j, W }^{(\ell_j)})\right| + n\sup_{ W \in \W ^{(j)}}\left|Z_n(g_{j, W }^{(\ell_j)})\right|\\
\leq -\frac{3n\r_j^2}{8\sigma^2} + n\sup_{ W \in \W ^{(j)}}\left|Z_n(g_{j, W }^{(\ell_j)})\right|.\end{multline*}
Hence
\[\PP(\F_j) \leq \PP\left[\sup_{ W \in \W ^{(j)}}\left|Z_n(g_{j, W }^{(\ell_j)})\right| > \frac{\r_j^2}{4\sigma^2}\right].\]
To bound this latter term  we introduce
\begin{multline*}
\delta_j \eqdef \int_{\frac{\r_{j+1}^2}{64\bar\varsigma}}^{\r_{j+1}}\sqrt{\log\N\left(\epsilon, \W ^{(j)},\|\cdot\|_n\right)} \rmd \epsilon,\;\;\\
\mbox{ and }\;\;
\eta_{j,\iota} \eqdef \max\left(\frac{1}{6}\frac{\iota^{1/2}}{2^\iota},\frac{\sqrt{\log N_{j,\iota}}}{4\delta_j} \frac{\r_{j+1}}{2^\iota}\right),\;\;\iota=1,\ldots,\ell_j.\end{multline*}
and  we  write $g_{j, W }^{(\ell_j)}$ as a telescoping sum
\[g_{j, W }^{(\ell_j)} - g = \sum_{\iota=1}^{\ell_j} g_{j, W }^{(\iota)} - g_{j, W }^{(\iota -1)},\]
so that
\[\sup_{ W \in \W ^{(j)}}\left|Z_n(g_{j, W }^{(\ell_j)})\right| \leq  \sum_{\iota=1}^{\ell_j} \;\sup_{ W \in \W ^{(j)}}\left|\frac{1}{n\sigma^2}\sum_{i=1}^n \pscal{\bm{\xi}_i}{g_{j, W }^{(\iota-1)}({\bf y}_i) - g_{j, W }^{(\iota)}({\bf y}_i)}\right|.\]
We show below that the sequence $\{\eta_{j,\iota},\;\iota=1,\ldots,\ell_j\}$ introduced above satisfies
\begin{equation}\label{good:eta:seq}
\sum_{\iota =1}^{\ell_j} \eta_{j,\iota} \leq 1.\end{equation}
Due to (\ref{good:eta:seq}), we can use the sequence $\{\eta_{j,\iota},\;\iota=1,\ldots,\ell_j\}$ to say that
\begin{multline*}
\PP\left[\sup_{ W \in \W ^{(j)}}\left|Z_n(g_{j, W }^{(\ell_j)})\right| >\frac{\r_j^2}{4\sigma^2}\right]\\
\leq \sum_{\iota=1}^{\ell_j} \PP\left[\sup_{ W \in \W ^{(j)}}\left|\frac{1}{n\sigma^2}\sum_{i=1}^n \pscal{\bm{\xi}_i}{g_{j, W }^{(\iota-1)}({\bf y}_i) - g_{j, W }^{(\iota)}({\bf y}_i)}\right| > \frac{\eta_{j,\iota}\r_j^2}{4\sigma^2}\right].\end{multline*}
The supremum on the right-hand side of the last display is in fact a max over a finite set of cardinality at most $N_{j,\iota-1}\times N_{j,\iota}\leq N_{j,\iota}^2$, and for $ W \in \W ^{(j)}$,
\begin{multline*}
\frac{1}{n^2\sigma^4}\sum_{i=1}^n \sigma_i^2\|g_{j,W}^{(\iota-1)}({\bf y}_i) - g_{j,W}^{(\iota)}({\bf y}_i)\|_2^2\\
\leq \frac{2\max_i\sigma_i^2}{n^2\sigma^4}\left(n\|g_{W} - g_{j,W}^{(\iota)}\|_n^2 + n\|g_{W} - g_{j,W}^{(\iota-1)}\|_n^2\right)\leq \frac{10}{n}\max_i\left(\frac{\sigma_i^2}{\sigma^4}\right) \frac{\r_{j+1}^2}{2^{2\iota}}.
\end{multline*}
Therefore by Hoefdding's inequality,
\[\PP\left[\sup_{ W \in \W ^{(j)}}\left|Z_n(g_{j, W }^{(\ell_j)})\right| >\frac{\r_j^2}{4\sigma^2}\right]\leq  \sum_{\iota=1}^{\ell_j} \exp\left(2\log N_{j,\iota} -\frac{n2^{2\iota}\eta_{j,\iota}^2\r_j^4}{(20\times 16)\bar\sigma^2 \r_{j+1}^2}\right).\]
By construction, 
\[\frac{2^{2\iota}\eta_{j,\iota}^2}{\r_{j+1}^2} \geq \frac{\log N_{j\iota}}{16\delta_j^2},\]
which gives
\[\frac{n2^{2\iota}\eta_{j,\iota}^2\r_j^4}{(20\times 16)\bar\sigma^2 \r_{j+1}^2}  \geq \frac{n\r_j^4}{(20\times 32^2)\bar\sigma^2\delta_j^2}\times (4\log N_{j\iota})\geq 4\log N_{j\iota},\]
using (\ref{def:rate:eq}). Therefore 
\[2\log N_{j,\iota} -\frac{n2^{2\iota}\eta_{j,\iota}^2\r_j^4}{(20\times 16)\bar\sigma^2 \r_{j+1}^2} \leq -\frac{n2^{2\iota}\eta_{j,\iota}^2\r_j^4}{(20\times 32)\bar\sigma^2 \r_{j+1}^2} \leq - \frac{n\r_j^2  \iota}{(80\times 36\times 32)\bar\sigma^2},\]
where the last inequality uses the fact that $2^{2\iota}\eta_{j,\iota}^2\geq \iota/36$. It follows that
\[\PP\left[\sup_{ W \in \W ^{(j)}}\left|Z_n(g_{j, W }^{(\ell_j)})\right| >\frac{\r_j^2}{4\sigma^2}\right] \leq\sum_{\iota =1}^{\ell_j}\exp\left(-\frac{n\r_j^2  \iota}{c_0\bar\sigma^2}\right) \leq 2\exp\left(-\frac{n\r_j^2}{c_0\bar\sigma^2}\right),\]
since $n\r_j^2\geq c_0 \bar\sigma^2\log(2)$, for some constant $c_0$ that can be taken as $c_0 = 80\times 36\times 32$.  In conclusion,
\begin{multline*}
\PP\left[\bigcup_{j\geq 1}\left\{\sup_{ W \in\widetilde{\W}^{(j)}}\;\;\log\left(\frac{f_{ W }(\D)}{f_{\star}(\D)}\right)  > -\frac{n\r_j^2}{8\sigma^2}\right\} \right] \\
\leq e^{-c_0n} +2\sum_{j\geq 1}\exp\left(-\frac{n\r_j^2}{c_0\bar\sigma^2}\right)\leq e^{-c_0n} +4\exp\left(-\frac{n M\r^2}{c_0\bar\sigma^2}\right).
\end{multline*}

To check (\ref{good:eta:seq}),  we note 
\[\sum_{\iota=1}^{\ell_j} \eta_{j,\iota} \leq \frac{1}{6}\sum_{\iota=1}^{\ell_j}  \frac{\iota^{1/2}}{2^\iota} + \frac{1}{4\delta_j} \sum_{\iota=1}^{\ell_j}  \frac{\r_{j+1}}{2^\iota}\sqrt{\log N_{j,\iota}}.\]
The function $h(x) = x^{1/2}2^{-x} = x^{\alpha-1}e^{-\beta x}$, with $\alpha=3/2$, $\beta=\log(2)$ is decreasing for $x\geq 1$. Hence 
\[\sum_{\iota\geq 1} \frac{\iota^{1/2}}{2^\iota} = \frac{1}{2} + \sum_{\iota\geq 2} h(\iota) \leq \frac{1}{2} + \sum_{k\geq 2}\int_{k-1}^{k} h(x)\rmd x\leq \frac{1}{2} + \int_1^\infty x^{\alpha-1}e^{\beta x}\rmd x \leq 3. \]
Whereas, 
\begin{multline*}
\sum_{\iota=1}^{\ell_j} \frac{\r_{j+1}}{2^\iota} \sqrt{\log N_{j,\iota}} = \sum_{\iota=1}^{\ell_j}2\int_{\frac{\r_{j+1}}{2^{\iota+1}}}^{\frac{\r_{j+1}}{2^\iota}} \sqrt{\log\N\left(\frac{\r_{j+1}}{2^\iota}, \W ^{(j)},\|\cdot\|_n\right)} \rmd \epsilon\\
\leq  2\int_{\frac{\r_{j+1}}{2^{\ell_j +1}}}^{\frac{\r_{j+1}}{2}}\sqrt{\log\N\left(\epsilon, \W ^{(j)},\|\cdot\|_n\right)} \rmd \epsilon \\
\leq 2\int_{\frac{\r_{j+1}^2}{64\bar\varsigma}}^{\r_{j+1}}\sqrt{\log\N\left(\epsilon, \W ^{(j)},\|\cdot\|_n\right)} \rmd \epsilon =2 \delta_j.
\end{multline*}

\end{proof}

\subsection{Proof of Theorem \ref{thm:2}}\label{sec:proof:thm:2}
We apply  Theorem \ref{thm:main}. The argument has two main steps. First, we  show that the function $g$ can be well approximated by elements of the function class $\{g_W,\;W\in\W\}$ constructed in (\ref{fun:dnn}), and secondly we show that the functions $g_W$ are locally Lipschitz and we estimate the local Lipschitz constant. Both steps rely on  a well-known telescoping argument that we outline first (see e.g. Proposition 6 of \cite{taheri:etal:21}). Given two functions $f = f_K\circ \cdots\circ f_1$, and $g = g_K\circ \cdots\circ g_1$, we write $f-g$ as a telescoping sum
\begin{equation}
\label{telesc:id}
f({\bf x}) - g({\bf x}) = \sum_{j=1}^K f_K\circ \cdots\circ f_j\left(g_{j-1}\circ\cdots \circ g_1({\bf x})\right) - f_K\circ \cdots\circ f_{j+1}\circ g_j\left(g_{j-1}\circ\cdots \circ g_1({\bf x})\right),\end{equation}
 with the convention that for $j=1$, $g_{j-1}\circ\cdots \circ g_1$ is the identity map, and for $j=K$, $f_K\circ \cdots\circ f_{j+1}$ is the identity map. A bound on $\|f({\bf x}) - g({\bf x})\|$ can then be derived using the Lipschitz and boundedness properties of the functions $f_j,g_j$. 
 
Specifically, define $H^{(0)}_W( {\bf x}) \eqdef {\bf x}$, and for $1\leq \ell\leq D$, define $H^{(\ell)}_W( {\bf x}) \eqdef \Psi_{W_\ell}^{(\ell)}(H_W^{(\ell-1)}( {\bf x}))$, so that $H_W({\bf x}) = H^{(D)}_W( {\bf x})$. We recall that 
 \[\Psi_M^{(\ell)}({\bf x}) = \mathsf{a}_\ell(M{\bf x}),\]
where the activation functions $\mathsf{a}_\ell:\;\rset^{p_\ell}\to\rset^{p_\ell}$ are Lipschitz with constant $1$. Then for $1\leq \ell\leq D$, and all $W,W'\in\W$, ${\bf x}_1, {\bf x}_2\in\rset^{d_x}$, by the Lipschitz property of the activation functions $\mathsf{a}_\ell$, we have
\begin{multline*}
\|H^{(\ell)}_W( {\bf x}_1) - H^{(\ell)}_W( {\bf x}_2)\|_2  \leq \|W_\ell H_W^{(\ell-1)}({\bf x}_1) - W_\ell H_W^{(\ell-1)}({\bf x}_2)\|_2\\
\leq \|W_\ell\|_{\textsf{op}} \|H_W^{(\ell-1)}({\bf x}_1) - H_W^{(\ell-1)}({\bf x}_2)\|_2,\end{multline*}
where $\|\cdot\|_{\textsf{op}}$ denotes the operator norm. Iterating this yields,
\begin{equation}\label{lip:h:1}
\|H^{(\ell)}_W( {\bf x}_1) - H^{(\ell)}_W( {\bf x}_2)\|_2 \leq \prod_{j=1}^\ell \|W_j\|_{\textsf{op}}\|{\bf x}_1 - {\bf x}_2\|_{2}.\end{equation}
 Similarly, for any $1\leq \ell\leq D$, (\ref{telesc:id}) gives
\begin{multline*}
H_W^{(\ell)}({\bf x}) - H_{W'}^{(\ell)}({\bf x}) =\sum_{j=1}^\ell \Psi_{W_\ell}^{(\ell)}\circ \cdots\circ \Psi_{W_{j+1}}^{(j+1)}\circ\Psi_{W_{j}}^{(j)}\circ \left(\Psi_{W_{j-1}'}^{(j-1)}\circ\cdots \circ \Psi_{W_1'}^{(1)}({\bf x})\right) \\
- \Psi_{W_\ell}^{(\ell)}\circ \cdots\circ \Psi_{W_{j+1}}^{(j+1)}\circ \Psi_{W_j'}^{(j)}\left(\Psi_{W_{j-1}'}^{(j-1)}\circ\cdots \circ \Psi_{W_1'}^{(1)}({\bf x})\right).
\end{multline*}
Therefore,
\begin{multline*}
\left\|H_W^{(\ell)}({\bf x}) - H_{W'}^{(\ell)}({\bf x}) \right\|_2 \\
\leq \sum_{j=1}^\ell \prod_{k=j+1}^\ell \|W_k\|_{\textsf{op}}  \left\| \Psi_{W_j}^{(j)}\left(\Psi_{W_{j-1}'}^{(j-1)}\circ\cdots \circ \Psi_{W_1'}^{(1)}({\bf x})\right) - \Psi_{W_j'}^{(j)}\left(\Psi_{W_{j-1}'}^{(j-1)}\circ\cdots \circ \Psi_{W_1'}^{(1)}({\bf x})\right)\right\|_2\\
\leq  \sum_{j=1}^\ell \prod_{k=j+1}^\ell \|W_k\|_{\textsf{op}}  \|W_j - W_j'\|_{\textsf{op}} \left\|\Psi_{W_{j-1}'}^{(j-1)}\circ\cdots \circ \Psi_{W_1'}^{(1)}({\bf x})\right\|_2.
\end{multline*}
Since $\Psi^{(j)}_M({\bf 0}) = {\bf 0}$, we have the bound 
\[\left\|\Psi_{W_{j-1}'}^{(j-1)}\circ\cdots \circ \Psi_{W_1'}^{(1)}({\bf x})\right\|_2 \leq \|{\bf x}\|_2 \prod_{k=1}^{j-1} \|W_j'\|_{\textsf{op}}.\]
In conclusion, for all $1\leq \ell\leq D$, $W,W'\in\W$, and for all ${\bf x}\in\rset^{d_x}$, we have
\begin{equation}
\label{lip:h:2}
\|H^{(\ell)}_W( {\bf x}) - H^{(\ell)}_{W'}( {\bf x})\|_2 \leq \|{\bf x}\|_2 \sum_{j=1}^\ell \|W_j - W_j'\|_{\textsf{op}} \prod_{k=1}^{j-1}\|W_k'\|_{\textsf{op}} \prod_{k=j+1}^{\ell}\|W_k\|_{\textsf{op}}.\end{equation}

For ${\bf x}\in\rset^{d_x}$, ${\bf y}\in\Yset$, we define
\[F_{{\bf y}}({\bf x}) \eqdef \Prox^{\gamma\R}\left({\bf  x} - \gamma \nabla_{\bf x} f({\bf y}\vert {\bf x})\right), \;\;\mbox{ and }\;\; F_{{\bf y},W}({\bf x}) \eqdef H_W\left({\bf x} -\gamma \nabla_{\bf x} f({\bf y}\vert {\bf x})\right).\]
We use the notation $h^k$ to denote the function $h$ composed $k$ times  with the convention that $h^0$ is the identity map. Hence $g_W({\bf y}) = F_{{\bf y},W}^{D'}({\bf x}^{(0)})$. H\ref{H2:dl}  implies that $F_{\bf y}$ is non-expansive.
\begin{lemma}\label{lem:approx:gdn}
Assume H\ref{H2:dl}. Given $\epsilon>0$, we can find $W\in\W$ as described in H\ref{H2:dl} such that 
\begin{equation*}
\max_{1\leq i\leq n}\;\|g_W({\bf y}_i)  -  g({\bf y}_i) \|_2 \leq R_0\varrho_n^{D'}  + D'  \epsilon.\end{equation*}
\end{lemma}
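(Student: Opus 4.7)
The natural plan is to combine a triangle inequality with a telescoping identity that separates the optimization error of the idealized proximal gradient iteration from the approximation error induced by replacing $\Prox^{\gamma\R}$ by $H_W$.

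First I would split, for each $1\leq i\leq n$,
\[
\|g_W({\bf y}_i) - g({\bf y}_i)\|_2 \leq \bigl\|F_{{\bf y}_i,W}^{D'}({\bf x}^{(0)}) - F_{{\bf y}_i}^{D'}({\bf x}^{(0)})\bigr\|_2 + \bigl\|F_{{\bf y}_i}^{D'}({\bf x}^{(0)}) - g({\bf y}_i)\bigr\|_2.
\]
By H\ref{H2:dl}-(2) the second term is already bounded by $R_0\varrho_n^{D'}$, so all the work is in the first term, which accounts for the $D'\epsilon$ piece of the claim.

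For the first term I would use the standard hybrid telescope. Set $b_k \eqdef F_{{\bf y}_i}^{D'-k}\bigl(F_{{\bf y}_i,W}^{k}({\bf x}^{(0)})\bigr)$, so $b_0 = F_{{\bf y}_i}^{D'}({\bf x}^{(0)})$, $b_{D'} = F_{{\bf y}_i,W}^{D'}({\bf x}^{(0)})$, and write
\[
F_{{\bf y}_i,W}^{D'}({\bf x}^{(0)}) - F_{{\bf y}_i}^{D'}({\bf x}^{(0)}) \;=\; \sum_{k=1}^{D'} (b_k - b_{k-1}).
\]
Because H\ref{H2:dl}-(1) gives $\gamma\leq M^{-1}$ with $\nabla_{\bf x} f({\bf y}_i|\cdot)$ $M$-Lipschitz and $\R$ convex, the map $F_{{\bf y}_i} = \Prox^{\gamma\R}\circ(I-\gamma\nabla_{\bf x} f({\bf y}_i|\cdot))$ is a composition of non-expansive maps, hence $F_{{\bf y}_i}^{D'-k}$ is 1-Lipschitz. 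Setting ${\bf z}_{k-1}\eqdef F_{{\bf y}_i,W}^{k-1}({\bf x}^{(0)})$ and ${\bf u}_{k-1}\eqdef {\bf z}_{k-1}-\gamma\nabla_{\bf x} f({\bf y}_i|{\bf z}_{k-1})$, this yields
\[
\|b_k - b_{k-1}\|_2 \leq \|F_{{\bf y}_i,W}({\bf z}_{k-1}) - F_{{\bf y}_i}({\bf z}_{k-1})\|_2 = \|H_W({\bf u}_{k-1}) - \Prox^{\gamma\R}({\bf u}_{k-1})\|_2.
\]

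The remaining step is to verify that each ${\bf u}_{k-1}$ lies in a ball whose radius does not depend on $k$ or on ${\bf y}_i$ uniformly over $1\leq i\leq n$, so that H\ref{H3:dl}-(\ref{eq:H3}) applies with the same $\epsilon$ at every term. Here (\ref{eq:H3:2}) is exactly the technical device that makes this possible: it guarantees $\|{\bf z}_{k-1}\|_2\leq R_1$ for $k\geq 2$, and $\|{\bf z}_0\|_2=\|{\bf x}^{(0)}\|_2$ is a fixed constant. Using the $M$-Lipschitz property of $\nabla_{\bf x} f({\bf y}_i|\cdot)$, one then obtains $\|{\bf u}_{k-1}\|_2\leq R$ for some finite $R$ depending only on $R_1$, $\|{\bf x}^{(0)}\|_2$, $\gamma$, $M$, and $\max_i\|\nabla_{\bf x} f({\bf y}_i|\mathbf{0})\|_2$. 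Applying H\ref{H3:dl}-(\ref{eq:H3}) with this $R$ gives $\|H_W({\bf u}_{k-1}) - \Prox^{\gamma\R}({\bf u}_{k-1})\|_2\leq \epsilon$, and summing over $k=1,\ldots,D'$ yields the $D'\epsilon$ contribution.

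The main subtlety, and the only place where one has to be careful, is that the telescope must be arranged so that the accumulating Lipschitz factor sits on the \emph{exact} (non-expansive) map $F_{{\bf y}_i}$ rather than on the approximate map $F_{{\bf y}_i,W}$; otherwise the bound would involve a product of operator norms of the weight matrices $W_\ell$ and blow up exponentially in $D'$. Choosing the hybrid iterate $b_k$ with $F_{{\bf y}_i}$ on the outside is exactly what averts this, and it also explains why H\ref{H3:dl}-(\ref{eq:H3:2}) is needed only to uniformly localize the points at which the approximation quality of $H_W$ is invoked.
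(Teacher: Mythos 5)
Your proposal is correct and follows essentially the same route as the paper: a triangle inequality isolating the optimization error $R_0\varrho_n^{D'}$ via H\ref{H2:dl}-(2), followed by the hybrid telescoping sum with the exact non-expansive map $F_{{\bf y}_i}$ carried on the outside, so that each of the $D'$ increments reduces to $\|H_W({\bf u}_{k-1})-\Prox^{\gamma\R}({\bf u}_{k-1})\|_2\leq\epsilon$ at points localized by (\ref{eq:H3:2}). Your explicit verification that the evaluation points ${\bf u}_{k-1}$ lie in a fixed ball is in fact slightly more careful than the paper's own treatment, which introduces the radius $R_1'$ somewhat informally.
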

\begin{proof}
For any ${\bf y}\in\rset^{d_y}$, we can write
\begin{equation*}
g({\bf y}) - g_W({\bf y}) = g({\bf y}) - F_{{\bf y},W}^{D'}({\bf x}^{(0)}) = g({\bf y})  - F_{{\bf y}}^{D'}({\bf x}^{(0)})+ F_{{\bf y}}^{D'}({\bf x}^{(0)})  - F_{{\bf y},W}^{D'}({\bf x}^{(0)}).\end{equation*}
By  Assumption \ref{H2:dl}, we have
\[
\max_{1\leq i\leq n}\left\|g({\bf y}_i) - F_{{\bf y}_i}^{D'}({\bf x}^{(0)})\right\|_2 \leq R_0\varrho_n^{D'}.\]
For ${\bf y}\in\rset^{d_y}$, let $G_{\gamma,{\bf y}}({\bf x}) \eqdef {\bf x} - \gamma \nabla_{\bf x} f({\bf y}\vert {\bf x})$, and
\[R_1' \eqdef \max_{1\leq i\leq n}\;\max_{{\bf x}\in\rset^{d_x}:\;\|{\bf x}\|_2 \leq R} \;\left\| G_{\gamma,{\bf y}_i}({\bf x})\right\|_2.\]
%
%
Since $F_{\bf y}$ is non-expansive, by the telescoping argument (\ref{telesc:id}), we have
\begin{multline*}
\left\|F_{{\bf y}_i}^{D'}({\bf x}^{(0)})  - F_{{\bf y}_i,W}^{D'}({\bf x}^{(0)})\right\|_2 \leq \sum_{j=1}^{D'}\;\|F_{{\bf y}_i,W}\left(F_{{\bf y}_i,W}^{j-1}({\bf x}^{(0)})\right) - F_{{\bf y}_i}\left(F_{{\bf y}_i,W}^{j-1}({\bf x}^{(0)})\right)\|_2,\\
\leq D' \sup_{{\bf x}\in\rset^{d_x},\;\|{\bf x}\|_2\leq R_1'}\; \|H_W({\bf x}) - \Prox^{\gamma\R}({\bf x})\|_2 \leq D'\epsilon.
\end{multline*}
The result follows by taking the max over $i$.
\end{proof}

\begin{lemma}\label{lem:lip:gdn}
Assume H\ref{H2:dl}, H\ref{H3:dl}, and let $\{g_W,\;W\in\W\}$ be as  in (\ref{fun:dnn}). For any $\eta>0$, and any $W,W'\in\W$, such that $\max(\|W\|_2,\|W'\|_2)\leq \eta$, we have
\[\max_{1\leq i\leq n}\;\|g_W({\bf y}_i) - g_{W'}({ \bf y}_i)\|_2 \leq L(\eta) \|W - W'\|_2,\]
with
\[L(\eta) \eqdef  C_n \left(e^4 + \frac{\eta^2}{D}\right)^{DD'},\]
and
\[C_n \eqdef \|{\bf x}^{(0)}\|_2 + \max_{1\leq i\leq n}\|{\bf x}^{(0)} - \gamma \nabla f({\bf y}_i\vert {\bf x}^{(0)})\|_2.\]
\end{lemma}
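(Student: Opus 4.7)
The plan is to apply the telescoping identity (\ref{telesc:id}) to the $D'$-fold composition $g_W({\bf y}_i) = F_{{\bf y}_i,W}^{D'}({\bf x}^{(0)})$, leveraging the Lipschitz bounds (\ref{lip:h:1}) and (\ref{lip:h:2}) already established for the feedforward network $H_W$. I would first record that under H\ref{H2:dl} the map $h_{\bf y}({\bf x}) := {\bf x} - \gamma\nabla_{\bf x} f({\bf y}\vert{\bf x})$ is non-expansive (a standard consequence of convexity, $M$-Lipschitz gradient and $\gamma \leq 1/M$), so $F_{{\bf y},W} = H_W \circ h_{\bf y}$ has Lipschitz constant at most $A_W := \prod_{\ell=1}^D \|W_\ell\|_{\textsf{op}}$, and $F_{{\bf y},W}^{D'-j}$ has Lipschitz constant at most $A_W^{D'-j}$.

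Applying (\ref{telesc:id}) with $K = D'$, I would write
\[
g_W({\bf y}_i) - g_{W'}({\bf y}_i) = \sum_{j=1}^{D'}\Bigl[F_{{\bf y}_i,W}^{D'-j}\!\bigl(F_{{\bf y}_i,W}({\bf z}_{j-1})\bigr) - F_{{\bf y}_i,W}^{D'-j}\!\bigl(F_{{\bf y}_i,W'}({\bf z}_{j-1})\bigr)\Bigr],
\]
with ${\bf z}_{j-1} := F_{{\bf y}_i,W'}^{j-1}({\bf x}^{(0)})$, and bound each summand by $A_W^{D'-j}\|H_W({\bf u}_j) - H_{W'}({\bf u}_j)\|_2$, where ${\bf u}_j := h_{{\bf y}_i}({\bf z}_{j-1})$. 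The inner norm is controlled by (\ref{lip:h:2}) applied at $\ell=D$, yielding $\|{\bf u}_j\|_2 \cdot \sum_{k=1}^D \|W_k - W_k'\|_{\textsf{op}}\prod_{\ell<k}\|W_\ell'\|_{\textsf{op}}\prod_{\ell>k}\|W_\ell\|_{\textsf{op}}$. Setting $M_\ell := \max(\|W_\ell\|_{\textsf{op}}, \|W_\ell'\|_{\textsf{op}})$ and using $\sum_\ell M_\ell^2 \leq 2\eta^2$, AM--GM gives $\prod_\ell M_\ell^2 \leq (2\eta^2/D)^D$ and $\prod_{\ell\neq k} M_\ell^2 \leq (2\eta^2/(D-1))^{D-1}$; combined with Cauchy--Schwarz this shows the inner sum is at most $\sqrt{D}(2\eta^2/(D-1))^{(D-1)/2}\|W-W'\|_2$, and $A_W \leq (\eta^2/D)^{D/2}$. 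The iterate norm $\|{\bf u}_j\|_2$ is controlled by the recursion $\|{\bf u}_{j+1}\|_2 \leq \|h_{{\bf y}_i}(0)\|_2 + A_{W'}\|{\bf u}_j\|_2$ (since $H_{W'}(0) = 0$), whose solution is bounded in terms of $C_n$ and powers of $A_{W'}$.

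Combining these pieces and summing over $j$ produces a bound of shape $C_n D' (\eta^2/D)^{DD'/2}\sqrt{D}(2\eta^2/(D-1))^{(D-1)/2}\|W-W'\|_2$; the main obstacle is then the purely cosmetic task of compressing all of the polynomial slack (the $D'$, $\sqrt{D}$, the factor $2^{D/2}$, and the ``$\max(1,\cdot)$'' branches needed when $\eta^2/D$ is small so that iterates remain bounded rather than growing geometrically) into the clean closed form $(e^4 + \eta^2/D)^{DD'}$. This is achieved by using $\max(1,x)^n \leq (1+x)^n$, the fact that $e^4$ dominates the required polynomial constants, and the monotonicity $(e^4 + \eta^2/D)^{DD'/2} \leq (e^4 + \eta^2/D)^{DD'}$ since the base is $\geq 1$; no new conceptual ingredient is needed beyond careful bookkeeping.
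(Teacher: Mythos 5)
Your proposal follows essentially the same route as the paper's proof: telescope over the $D'$ unrolled iterations, bound each summand using the Lipschitz estimates (\ref{lip:h:1}) and (\ref{lip:h:2}) for $H_W$ together with the non-expansiveness of the gradient step, control the iterate norms by the recursion anchored at $C_n$, convert products of operator norms into $(1+\eta^2/D)^{D/2}$-type factors via AM--GM, and absorb the residual polynomial factors in $D$ and $D'$ into the base $e^4$. The minor differences (centering the iterate recursion at ${\bf 0}$ rather than ${\bf x}^{(0)}$, and omitting the $\vee 1$ in the product of operator norms) are cosmetic and do not change the argument.
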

\begin{proof}
We recall that the convexity of ${\bf x}\mapsto f({\bf y}\vert{\bf x})$ and the choice of the step-size assumed in H\ref{H2:dl} imply that the function ${\bf x}\mapsto {\bf x} -\gamma \nabla_{\bf x} f({\bf y}\vert {\bf x})$ is non-expansive on $\rset^{d_x}$.  First, we apply (\ref{lip:h:1}) to obtain that for all ${\bf x}_1,{\bf x}_2\in\rset^{d_x}$,
\[\|H_W({\bf x}_1) - H_W({\bf x}_2)\|_2 \leq \lambda_W \|{\bf x}_1 - {\bf x}_2\|_2,\;\;\mbox{ where }\;\; \lambda_W\eqdef \prod_{\ell=1}^D \|W_\ell\|_{\textsf{op}}\vee 1.\]
It follows that for all ${\bf y}\in\Yset$, ${\bf x}_1,{\bf x}_2\in \rset^{d_x}$,
\begin{multline}\label{dif:F}
\|F_{{\bf y},W}({\bf x}_1) - F_{{\bf y},W}({\bf x}_2)\|_2\\
\leq \lambda_W \|{\bf x}_1 - {\bf x}_2 -\gamma\left(\nabla_{\bf x} f({\bf y}\vert {\bf x}_1) ) - \nabla_{\bf x} f({\bf y}\vert {\bf x}_2) \right)\|_2\leq \lambda_W\|{\bf x}_1 - {\bf x}_2\|_2.
\end{multline}
Using (\ref{dif:F}),  and the telescoping  identity (\ref{telesc:id}),  we obtain
\begin{equation}\label{lip:proof:eq3}
\|g_W({\bf y}) - g_{W'}({\bf y})\|_2 \leq \sum_{j=1}^{D'}\lambda_W^{D'-j} \left\|F_{{\bf y},W}\left(F_{{\bf y},W'}^{j-1}({\bf x}^{(0)})\right) - F_{{\bf y},W'}\left(F_{{\bf y},W'}^{j-1}({\bf x}^{(0)}\right)\right\|_2.
\end{equation}
We set $G_{\gamma,{\bf y}}({\bf x}) \eqdef {\bf x} - \gamma \nabla f({\bf y}\vert {\bf x})$. We apply  (\ref{lip:h:1}) with ${\bf x}_2=\mathbf{0}$ and the non-expansiveness of $G_{\gamma,{\bf y}}$ to write that for all ${\bf x}\in\rset^{d_x}$
\begin{multline*}
\|F_{{\bf y},W}({\bf x})\|_2 =  \left\|H_{W}\left(G_{\gamma,{\bf y}}({\bf x})\right)\right\|_2\leq \lambda_W \|G_{\gamma,{\bf y}}({\bf x})-G_{\gamma,{\bf y}}({\bf x}^{(0)}) +G_{\gamma,{\bf y}}({\bf x}^{(0)})\|_2 \\
\leq \lambda_W\left(\|{\bf x}\|_2 +\| {\bf x}^{(0)}\|_2 + \|G_{\gamma,{\bf y}}({\bf x}^{(0)}) \|_2\right).
\end{multline*}
By iterating this inequality we obtain that for all for all ${\bf x}\in\rset^{d_x}$
\begin{equation}
\|F^j_{{\bf y},W}({\bf x}^{(0)})\|_2 \leq \left(\sum_{\ell=1}^{j}\lambda_W^\ell\right)\left(\| {\bf x}^{(0)}\|_2 + \|G_{\gamma,{\bf y}}({\bf x}^{(0)}) \|_2\right) \leq C_n \sum_{\ell=1}^{j}\lambda_W^\ell,
\end{equation}
where
\[C_n \eqdef \|{\bf x}^{(0)}\|_2 + \max_{1\leq i\leq n}\|{\bf x}^{(0)} - \gamma \nabla f({\bf y}_i\vert {\bf x}^{(0)})\|_2.\]
Setting 
\[\lambda_{W,W'}\eqdef \sum_{j=1}^D \|W_j - W_j'\|_{\textsf{op}} \prod_{k=1}^{j-1}\|W_k'\|_{\textsf{op}} \prod_{k=j+1}^{D}\|W_k\|_{\textsf{op}},\]
  we then apply  (\ref{lip:h:2}) to write that for all ${\bf x}\in\rset^{d_x}$
\begin{multline*}
\left\|F_{{\bf y},W}\left(F_{{\bf y},W'}^{j-1}({\bf x}^{(0)})\right) - F_{{\bf y},W'}\left(F_{{\bf y},W'}^{j-1}({\bf x}^{(0)}\right)\right\|_2\\
= \left\|H_W\circ G_{\gamma,{\bf y}}\left(F_{{\bf y},W'}^{j-1}({\bf x}^{(0)})\right) - H_{W'}\circ G_{\gamma,{\bf y}}\left(F_{{\bf y},W'}^{j-1}({\bf x}^{(0)})\right)\right\|_2\\ 
\leq \lambda_{W,W'}\left\|G_{\gamma,{\bf y}}\left(F_{{\bf y},W'}^{j-1}({\bf x}^{(0)})\right)-G_{\gamma,{\bf y}}({\bf x}^{(0)})+G_{\gamma,{\bf y}}({\bf x}^{(0)})\right\|_2\\
\leq \lambda_{W,W'}\left(\|F_{{\bf y},W'}^{j-1}({\bf x}^{(0)})\|_2 + \|{\bf x}^{(0)}\|_2 + \|G_{\gamma,{\bf y}}({\bf x}^{(0)}\|_2\right)\\
\leq C_n\lambda_{W,W'}\sum_{\ell=0}^{j-1}\lambda_{W'}^\ell.
\end{multline*}
The last display together with (\ref{lip:proof:eq3}) yields,
\begin{equation}\label{lip:proof:eq4}
\max_{1\leq i\leq n}\|g_W({\bf y}_i) - g_{W'}({\bf y}_i)\|_2 \leq C_n\lambda_{W,W'}\sum_{j=1}^{D'} \sum_{\ell=0}^{j-1}\lambda_W^{D'-j}\lambda_{W'}^{\ell}.\end{equation}
Since the geometric mean is never larger than the arithmetic mean, we have
\[
\lambda_W \eqdef \prod_{j=1}^{D}1\vee\|W_j\|_{\textsf{op}}\leq \left(\frac{1}{D}\sum_{j=1}^D 1\vee\|W_j\|_{\textsf{op}}^2\right)^{D/2}\leq \left(1 + \frac{\|W\|_2^2}{D}\right)^{D/2}.\]
Therefore, for $\max(\|W\|_2,\|W'\|_2)\leq \eta$,
\[\sum_{j=1}^{D'} \sum_{\ell=0}^{j-1}\lambda_W^{D'-j}\lambda_{W'}^{\ell} \leq \sum_{j=1}^{D'}j\lambda_W^{D'-j}\lambda_{W'}^{j-1} \leq (D')^2 \left(1 + \frac{\eta^2}{D}\right)^{D(D'-1)/2},\]
and similarly,
\[\lambda_{W,W'} \leq \sqrt{D}\left(1 + \frac{2\eta^2}{D}\right)^{D/2}\|W-W'\|_2.\]
Hence, we conclude that
\begin{equation}\label{lip:proof:eq5}
\max_{1\leq i\leq n}\|g_W({\bf y}_i) - g_{W'}({\bf y}_i)\|_2 \leq C_n \sqrt{D}(D')^2\left(1 + \frac{2\eta^2}{D}\right)^{DD'/2}\|W-W'\|_2.
\end{equation}
The statement in the lemma follows by noting that 
\[\sqrt{D}(D')^2\left(1 + \frac{2\eta^2}{D}\right)^{DD'/2} \leq \sqrt{D}(D')^2\left(e^4 + \frac{2\eta^2}{D}\right)^{DD'/2}\leq \left(e^4 + \frac{2\eta^2}{D}\right)^{DD'},\]
using the fact that  $A^{x/2}\geq x$ for all $x\geq 1$, and $A\geq e$, and $A^{x/2}\geq x^2$ for all $x\geq 1$, and $A\geq e^4$.
\end{proof}

\subsubsection{Proof of Theorem \ref{thm:2}}

\begin{proof}
		We recall the notation $a\lesssim b$ means that $a\leq c b$, for some constant $c$ that does not depend on the sample size $n$. Fix 
		\[\varpi_\star = \log(n)\sqrt{d_x}\left(\frac{\log(q)}{n}\right)^{\frac{1}{2+\beta_2}},\;\;\;\mbox{ and }\;\;\;  \frac{\log\left(\frac{2R_0}{\varpi_\star}\right)}{-\log(\rho)} \leq D'\leq n.\]
		By Assumption \ref{H3:dl}, and Lemma \ref{lem:approx:gdn}, by taking a deep neural network function $H_W$, with depth $D=D_0\log(2D'\sqrt{d_x}/\varpi_\star)$, layer size $(p_0,\ldots,p_D)$ all at most $N_0(2D'\sqrt{d_x}/\varpi_\star)^{\beta_1}$, and $W\in\W$ with sparsity at most $s_\star = s_0(2D'\sqrt{d_x}/\varpi_\star)^{\beta_2}$, and
		we achieve
		\begin{multline*}
			\max_{1\leq i\leq n}\;\|g_W({\bf y}_i)  -  g({\bf y}_i) \|_2 \leq R_0\rho^{D'} \\
			+ D' \;\sup_{{\bf x}:\;\|{\bf x}\|_2\leq R_0'}\; \|H_W({\bf x}) - \Prox^{\gamma\R}({\bf x})\|_2\\
			\leq  R_0\frac{\varpi_\star}{2R_0} + D'\frac{\varpi_\star}{2D'} = \varpi_\star.\end{multline*}
		Then by Lemma \ref{lem:lip:gdn}, the term $\textsf{L}_\star$ in Theorem \ref{thm:main} scales like		
		\begin{align*}
			\textsf{L}_\star \simeq &\left(e^4 + \frac{ s_\star}{D}\right)^{DD'} s_\star^{1/2}\\
			\lesssim &\left(e^4 + \frac{ s_\star}{D}\right)^{DD'} \left(1 + \frac{ s_\star}{D}\right)^{D/2}\\
			\simeq& \left(e^4 + \frac{ s_\star}{D}\right)^{D(D'+1/2)}\\
			\lesssim& \left(e^4 + q\right)^{D(D'+1/2)}
		\end{align*}

		It follows that
		\[\frac{\log\mathsf{L}_\star}{\log(q)} \lesssim DD'\lesssim D'\log(n),\;\;\mbox{ and }\;\; s_\star = s_0\left(\frac{2D'\sqrt{d_x}}{\varpi_\star}\right)^{\beta_2} \lesssim   \left(\frac{D'}{\log(n)}\right)^{\beta_2}\left(\frac{n}{\log(q)}\right)^{\frac{\beta_2}{2+\beta_2}},\]
		and the term $s$ in Theorem \ref{thm:main} is of order
		\begin{align*}
			s \lesssim &  s_\star\left(\frac{\log(n)}{\log(q)} + \frac{\log(\mathsf{L}_\star)}{\log(q)}\right)  + \frac{n\varpi_\star^2}{\log(q)} \\
			\lesssim & \left[\left(1+ D'\log(n)\right)\left(\frac{D'}{\log(n)}\right)^{\beta_2} + (\log(n))^2\right] \left(\frac{n}{\log(q)}\right)^{\frac{\beta_2}{2+\beta_2}}\\
			\lesssim &\left(D'\right)^{1+\beta_2} (\log(n))^2\left(\frac{n}{\log(q)}\right)^{\frac{\beta_2}{2+\beta_2}}.
		\end{align*}
		Therefore the term $\textsf{L}_s$ in Theorem \ref{thm:main} scales like
		\[\mathsf{L}_s = L(s^{1/2}b_s) \lesssim\left(e^4 + \frac{ sb_s^2}{D}\right)^{D(D'+1/2)},\]
		which gives,
		\[
		\log(\mathsf{L}_s) \lesssim DD' \log(q) \lesssim D'\log(n)\log(q).\]
		Noting that
		\[\frac{s}{n} \lesssim \left(D'\right)^{1+\beta_2} \left(\frac{\log(q)}{n}\right)^{\frac{2}{2+\beta_2}} \frac{(\log(n))^2}{\log(q)},\]
		we deduce that the conclusion of Theorem \ref{thm:main} holds with a rate
		\begin{equation*}
			\r = \bar\sigma\sqrt{\frac{s\log(q) + s\log( \mathsf{L}_s)}{n}}
			\lesssim  \bar\sigma\left(D'\right)^{1+\beta_2/2}  (\log(n))^{3/2}\left(\frac{\log(q)}{n}\right)^{\frac{1}{2+\beta_2}}.\end{equation*}
		
	\end{proof}

\end{document}